\documentclass[11pt]{article}
\usepackage{amsmath,amssymb,amscd,amsthm}
\usepackage{graphicx}
\usepackage{dcolumn}
\usepackage{bm}
\usepackage{ifthen}
\usepackage{./rays_defs_18}

\usepackage[T1]{fontenc}

\usepackage[utf8]{inputenc} 
\usepackage[T1]{fontenc}    
\usepackage{url}            
\usepackage{booktabs}       
\usepackage{amsfonts}       
\usepackage{nicefrac}       
\usepackage{microtype}      
\usepackage{xcolor}         

\usepackage{hyperref}
\hypersetup{
    bookmarks=true,         
    unicode=false,          
    pdftoolbar=true,        
    pdfmenubar=true,        
    pdffitwindow=false,     
    pdfstartview={FitH},    
    pdfauthor={Fatih Furkan YILMAZ},     
    pdfsubject={Subject},   
    pdfcreator={Fatih Furkan YILMAZ},   
    pdfproducer={Producer}, 
    pdfnewwindow=true,      
    colorlinks=true,       
    linkcolor=red,          
    citecolor=green,        
    filecolor=magenta,      
    urlcolor=cyan           
}

\usepackage{filecontents}

\usepackage[
backend=bibtex,
url=false,isbn=false,doi=false,
date=year,
hyperref=auto,
style=alphabetic,
natbib=true,
sorting=nty,
firstinits=true,
maxnames=2, maxbibnames=10,
]{biblatex}

\bibliography{./bibliography.bib} 

\usepackage{bbm}
\newcommand{\ind}[1]{ \mathbbm{1}_{\left\{ #1 \right\}} }

\usepackage{comment}

\usepackage{tikz}
\usepackage{varwidth}
\usepackage{pgfplots}
\usepgfplotslibrary{groupplots} 
\usetikzlibrary{pgfplots.groupplots}
\usetikzlibrary{matrix,calc,arrows,decorations.pathmorphing}
\usepackage{pgfplots,pgfplotstable}
\usepgfplotslibrary{fillbetween}
\pgfplotsset{compat=1.14}


\usepackage{color,framed,bm}

\usepackage{colortbl}
\definecolor{tblblue}{RGB}{101,124,191}
\definecolor{tblred}{rgb}{1,0.93,0.93}
\definecolor{DarkBlue}{rgb}{0,0,0.7} 
\definecolor{BrickRed}{RGB}{203,65,84}
\definecolor{mygray}{gray}{0.6}


\newtheorem{lemma}{Lemma}

\newtheorem{theorem}{Theorem}

\setlength{\oddsidemargin}{0pt}
\setlength{\evensidemargin}{0pt}
\setlength{\textwidth}{6.5in}
\setlength{\topmargin}{0in}
\setlength{\textheight}{8.5in}

\newcommand{\xinv}{x_{\mathrm{inv}}}
\newcommand{\xspr}{x_{\mathrm{sp}}}
\newcommand{\winv}{w_{\mathrm{inv}}}
\newcommand{\wsp}{w_{\mathrm{sp}}}
\newcommand{\csp}{c_{\mathrm{sp}}}

\newcommand{\setXD}{{{\mathcal X}_D}}
\newcommand{\setXA}{{{\mathcal X}_A}}
\newcommand{\Emc}{{{\mathcal E}_{mc}}}

\newcommand\DsetP{\setD^\setP}
\newcommand\DsetQ{\setD^\setQ}
\newcommand\vpi{\bm \pi}


\newcommand{\setD}{{\mathcal D}}

\newcommand{\setQ}{{\mathcal Q}}
\newcommand{\setP}{{\mathcal P}}
\newcommand{\setY}{{\mathcal Y}}
\newcommand{\setC}{{\mathcal C}}

\newcommand{\calib}{{\mathrm{cal}}}

\newcommand{\APS}{{\mathrm{APS}}}
\newcommand{\RAPS}{{\mathrm{RAPS}}}
\newcommand{\TPS}{{\mathrm{TPS}}}
\newcommand{\QTC}{{\mathrm{QTC}}}
\newcommand{\QTCS}{{\mathrm{QTC-S}}}
\newcommand{\QTCT}{{\mathrm{QTC-T}}}

\newcommand{\kreg}{k_{\mathrm{reg}}}

\long\def\comment#1{}


\pgfplotsset{select coords between index/.style 2 args={
    x filter/.code={
        \ifnum\coordindex<#1\fi
        \ifnum\coordindex>#2\fi
    }
}}
\pgfplotsset{
    discard if/.style 2 args={
        x filter/.code={
            \edef\tempa{\thisrow{#1}}
            \edef\tempb{#2}
            \ifx\tempa\tempb
                
            \fi
        }
    },
    discard if not/.style 2 args={
        x filter/.code={
            \edef\tempa{\thisrow{#1}}
            \edef\tempb{#2}
            \ifx\tempa\tempb
            \else
                
            \fi
        }
    }
}
\makeatletter
\pgfplotsset{
    /tikz/max node/.style={
        anchor=south,
    },
    /tikz/min node/.style={
        anchor=north,
        name=minimum
    },
    mark min/.style={
        point meta rel=per plot,
        visualization depends on={x \as \xvalue},
        scatter/@pre marker code/.code={%
            \ifx\pgfplotspointmeta\pgfplots@metamin
                \def\markopts{scale=0, color=white}%
                \coordinate (minimum);
                \node [min node] {};
            \else
                \def\markopts{mark=none}
            \fi
            \expandafter\scope\expandafter[\markopts,every node near coord/.style=green]
        },%
        scatter/@post marker code/.code={%
            \endscope
        },
        scatter,
    },
    mark max/.style={
        point meta rel=per plot,
        visualization depends on={x \as \xvalue},
        scatter/@pre marker code/.code={%
        \ifx\pgfplotspointmeta\pgfplots@metamax
            \def\markopts{}%
            \coordinate (maximum);
            \node [max node] {
                \pgfmathprintnumber[fixed]{\xvalue},%
                \pgfmathprintnumber[fixed]{\pgfplotspointmeta}
            };
        \else
            \def\markopts{mark=none}
        \fi
            \expandafter\scope\expandafter[\markopts]
        },%
        scatter/@post marker code/.code={%
            \endscope
        },
        scatter
    }
}
\makeatother

\begin{document}

\begin{center}

{\bf{\LARGE{
    Test-time recalibration of conformal predictors under distribution shift based on unlabeled examples
}}}

\vspace*{.2in}

{\large{
    \begin{tabular}{cccc}
        Fatih Furkan Yilmaz$^{\ast}$ 
        and 
        Reinhard Heckel$^{\ast,\dagger}$
    \end{tabular}
}}

\vspace*{.05in}

\begin{tabular}{c}
    $^\ast$Dept. of Electrical and Computer Engineering, Rice University\\
    $^\dagger$Dept. of Computer Engineering, Technical University of Munich
\end{tabular}


\vspace*{.1in}

\end{center}


\begin{abstract}
Modern image classifiers are very accurate, but the predictions come without uncertainty estimates. Conformal predictors provide uncertainty estimates by computing a set of classes containing the correct class with a user-specified probability based on the classifier's probability estimates. To provide such sets, conformal predictors often estimate a cutoff threshold for the probability estimates based on a calibration set. Conformal predictors guarantee reliability only when the calibration set is from the same distribution as the test set. Therefore, conformal predictors need to be recalibrated for new distributions. However, in practice, labeled data from new distributions is rarely available, making calibration infeasible. In this work, we consider the problem of predicting the cutoff threshold for a new distribution based on unlabeled examples. While it is impossible in general to guarantee reliability when calibrating based on unlabeled examples, we propose a method that provides excellent uncertainty estimates under natural distribution shifts, and provably works for a specific model of a distribution shift.
\end{abstract}

\section{Introduction}
\label{sec:intro}
Consider a (black-box) image classifier, typically a deep neural network with a softmax layer at the end, 
that is trained to output probability estimates for $L$ classes given an input feature vector $\vx \in \reals^{d}$. 
Conformal predictors are wrapped around such a classifier and generate a set of classes that contains the correct label with a user-specified probability based on the classifier's probability estimates. 

Let $\vx \in \reals^{d}$ be a feature vector with associated label $y \in \{1,\ldots, L\}$. 
We say that a set-valued function $\setC$ generates valid prediction sets for the distribution $\setP$ if
\begin{align}
    \label{eq:coverage_cond}
    \PR[(\vx,y) \sim \setP]{y \in \setC(\vx)} \geq 1 - \alpha,
\end{align}
where $1-\alpha$ is the desired coverage level.
Conformal predictors generate valid set generating functions $\setC$ for the distribution $\setP$ by utilizing a 
calibration set consisting of labeled examples $\{(\vx_1,y_1),\ldots, (\vx_n,y_n)\}$ drawn from the distribution $\setP$. An important caveat of conformal predictors is that the examples from the calibration set are drawn from the test distribution $\setP$. 

This assumption is difficult to satisfy in applications and potentially limits the applicability of conformal prediction methods in practice. 
In fact, in practice one usually expects a distribution shift between the calibration set and the examples at inference (or the test set), in which case the coverage guarantees provided by conformal prediction methods are void. 
For example, the new ImageNetV2 test set was created in the same way as the original ImageNet test sets, yet~\citet{recht2019ImageNetClassifiersGeneralize} found 
a notable drop in classification accuracy for all classifiers considered.

Ideally, a conformal predictor is recalibrated on a distribution before testing, otherwise the coverage guarantees are not valid~\citep{cauchois2020RobustValidationConfident}. However, in real-world applications, where distribution shifts are ubiquitous, labeled data from new distributions is scarce or non-existent. 

We therefore consider the problem of recalibrating a conformal predictor only based on unlabeled data from the new domain. 
This is an ill-posed problem: it is in general impossible to calibrate a conformal predictor based on unlabeled data. Yet, we propose a simple calibration method that gives excellent performance for a variety of natural distribution shifts.

\paragraph{Organization and contributions.}
We start with concrete examples on how conformal predictors yield miscalibrated uncertainty estimates under natural distribution shifts. 
We next propose a simple recalibration method that only uses unlabeled examples from the target distribution. 
We show that our method correctly recalibrates a popular conformal predictor~\citep{sadinle2019LeastAmbiguousSetValued} on a theoretical toy model.
We provide empirical results for various natural distribution shifts of ImageNet showing that recalibrating conformal predictors using our proposed method significantly reduces the performance gap. In
certain cases, it even 
achieves near oracle-level coverage.

\paragraph{Related work.} 

Several works have considered the robustness of conformal prediction to distribution shift. ~\citet{tibshirani2019ConformalPredictionCovariate} and \citet{park2022PACPredictionSets} propose methods that assume a covariate shift and calibrate based on estimating the amount of covariate shift. 
\citet{podkopaev2021DistributionfreeUncertaintyQuantification} studies the related, but discrete setting of label shifts between the source and target domains and proposes a method that is more robust under the label shift setting.
In contrast, we focus on complex image datasets for which covariate shift is not well defined and label shift not broadly relevant. 
In Section~\ref{sec:covariate_shift_comparison}, we provide a comparison of our method to the above covariate shift based methods for a setting where we have access to labeled examples from multiple domains during training/calibration, one of which correspond to the target distribution.

We are not aware of other works studying calibration of conformal predictors under distribution shift based on unlabeled examples. However, prior works propose to make conformal predictors robust to various distribution shifts from the source distribution of the calibration set~\citep{cauchois2020RobustValidationConfident,gendler2022AdversariallyRobustConformal}, via calibrating the conformal predictor to achieve a desired coverage in the worse case scenario of the considered distribution shifts.
\citet{cauchois2020RobustValidationConfident} considers covariate shifts and calibrates the conformal predictor to achieve coverage for the worst-case distribution within the $f$-divergence ball of the source distribution. 
\citet{gendler2022AdversariallyRobustConformal} considers adversarial perturbations as distribution shifts and calibrates a conformal predictor to achieve coverage for the worst-case distribution obtained through $\ell_2$-norm bounded adversarial noise. 

While making the conformal predictor robust to a range of worst-case distributions at calibration time allows maintaining coverage under the worst-case distributions, these approaches have two shortcomings: 
First, natural distribution shifts are difficult to capture mathematically, and models like covariate-shifts or adversarial perturbations do not seem to model natural distribution shifts (such as that from ImageNet to ImageNetV2) accurately.  
Second, calibrating for a worst-case scenario results in an overly conservative conformal predictor that tends to yield much higher coverage than desired for test distributions that correspond to a less severe shift from the source, which comes at the cost of reduced efficiency (i.e., larger set size, or larger confidence interval length). 
In contrast, our method does not compromise the efficiency of the conformal predictor on easier distributions as we recalibrate the conformal predictor for any new dataset.

A related problem is to predict the accuracy of a classifier on new distributions from unlabeled data sampled from a new distribution~\citep{deng2021AreLabelsAlways,chen2021MANDOLINEModelEvaluation,jiang2021AssessingGeneralizationSGD,deng2021WhatDoesRotation,guillory2021PredictingConfidenceUnseen,garg2021LeveragingUnlabeledData}. In particular, \citet{garg2021LeveragingUnlabeledData} proposed a simple method that achieves state-of-the-art performance in predicting classifier accuracy across a range of distributions. 
However, the calibration problem we consider is fundamentally different than estimating the accuracy of a classifier. 
While predicting the accuracy of the classifier would allow making informed decisions on whether to use the classifier for a new distribution, it doesn't provide a solution for recalibration. 


\section{Background on conformal prediction}

Consider a black-box classifier with input feature vector $\vx \in \reals^d$ that outputs a probability estimate $\pi_\ell (\vx) \in [0,1]$ for each class $\ell = 1,\ldots, L$. Typically, the classifier is a neural network trained on some distribution, and the probability estimates are the softmax outputs. 
We denote the order statistics of the probability estimates by $\pi_{(1)} (\vx) \geq \pi_{(2)} (\vx) \geq \ldots \geq \pi_{(L)} (\vx)$.

Many conformal predictors are based on calibrating on a calibration set 
$\DsetP = \lbrace (\vx_i, y_i) \rbrace_{i=1}^n$ 
to find a cutoff threshold~\citep{sadinle2019LeastAmbiguousSetValued, romano2020ClassificationValidAdaptive,angelopoulos2020UncertaintySetsImage,bates2021DistributionFreeRiskControllingPrediction}
that achieves the desired empirical coverage on this set.
Here, the superscript $\setP$ denotes the distribution from which the examples in the calibration set are sampled from. 
Given a set-valued function 
$\setC (\vx, u, \tau) \subset \{1,\ldots,L\}$ containing the set of predicted classes by the conformal predictor, such conformal predictors compute the threshold parameter $\tau$ as
\begin{align}
\label{eq:tau-calibrated}
    \tau^\ast
    =
    \inf \left\{
        \tau : \vert \{ i: y_i \in \setC (\vx_i, u_i, \tau) \} \vert
        \geq
        (1 - \alpha) (n + 1)
    \right\},
\end{align}
where $u_i$ is added randomization to smoothen the cardinality term, chosen independently and uniformly from the interval $[0,1]$, see~\citet{vovk2005AlgorithmicLearningRandom} on smoothed conformal predictors. Finally, the `$+1$' term in the $(n + 1)$ term is a bias correction for the finite size of the calibration set.

This conformal calibration procedure achieves distributional coverage as defined in the expression~\eqref{eq:coverage_cond}, for any set valued function $\setC (\vx, u, \tau)$ satisfying the nesting property $\setC (\vx, u, \tau_1) \subseteq \setC (\vx, u, \tau_2)$ for $\tau_1 < \tau_2$, see~\citep[Thm. 1]{angelopoulos2020UncertaintySetsImage}. 

In this paper, we primarily focus on the popular conformal predictors \emph{Thresholded Prediction Sets} (TPS)~\citep{sadinle2019LeastAmbiguousSetValued} and \emph{Adaptive Prediction Sets} (APS)~\citet{romano2020ClassificationValidAdaptive}. The set generating functions of the two conformal predictors are
%
\begin{align}
    \label{eq:tps-set-function}
    \setC^\TPS (\vx, \tau)
    &=
    \left\{
        \ell= 1,\ldots,L  \colon \pi_{\ell} (\vx)
        \geq
        1 - \tau
    \right\}, \\
    \label{eq:aps-set-function}
    \setC^\APS (\vx, u, \tau)
    &=
    \{
        \ell = 1,\ldots,L  \colon \sum_{j=1}^{\ell-1} \pi_{(j)} (\vx) + u \cdot \pi_{(\ell)} (\vx)
        \leq
        \tau
    \},
\end{align}
with $u \sim U(0,1)$ for smoothing. The set generating function of TPS doesn't require smoothing since each softmax score is independently thresholded and therefore there are no discrete jumps. 


Computing the threshold $\tau$ through conformal calibration~\eqref{eq:tau-calibrated} requires a labeled calibration set from distribution $\setP$. 
We therefore add a superscript to the threshold to designate which distribution the calibration set set was sampled from; for example $\tau^\setP$ indicates that the calibration set was sampled from the distribution $\setP$. 
The prediction set function $\setC^\TPS$ for TPS and $\setC^\APS$ for APS both satisfy the nesting property. Therefore, TPS and APS calibrated on a calibration set $\DsetP$ by computing the threshold in the expression~\eqref{eq:tau-calibrated} is guaranteed to achieve coverage on the distribution $\setP$. 
However, coverage is only guaranteed if the test distribution $\setQ$  is the same as the calibration distribution $\setP$. 


\section{Failures under distribution shifts and problem statement}
\label{sec:dist_shift_examples}
Often we're most interested in quantifying uncertainty with conformal prediction when we apply a classifier to new data that might come from a slightly different distribution than the distribution we calibrated on. Yet, conformal predictors only provide coverage guarantees for data coming from the same distribution as the calibration set, and the coverage guarantees often fail even under slight distribution shifts. 
For example, our experiments (see Figure~\ref{fig:coverage_vs_alpha}) show that APS calibrated on ImageNet-Val to yield $1-\alpha=0.9$ coverage on the only achieves a coverage of $0.64$ on the ImageNet-Sketch dataset, which consists of sketches of the ImageNet-Val images and hence constitutes a distribution shift~\citep{wang2019LearningRobustGlobal}. 

Different conformal predictors typically have different coverage gaps under the same distribution shift. More efficient conformal predictors (i.e., those that produce smaller prediction sets) tend to have a larger coverage gap under a distribution shift. For example, both TPS and RAPS (a generalization of APS proposed by~\citet{angelopoulos2020UncertaintySetsImage}) yield smaller confidence sets, but only achieve a coverage of $0.38$ vs. $0.64$ for APS on the ImageNet-Sketch distribution shift discussed above.

Even under more subtle distribution shifts such as subpopulation shifts~\citep{santurkar2020BREEDSBenchmarksSubpopulation}, the achieved coverage can drop significantly. 
For example, APS calibrated to yield $1-\alpha=0.9$ coverage on the source distribution of the Living-17 BREEDS dataset only achieves a coverage of $0.68$ on the target distribution. The source and target distributions contain images of exclusively different breeds of animals while the animals' species is shared as the label~\citep{santurkar2020BREEDSBenchmarksSubpopulation}.

\paragraph{Problem statement.}

Our goal is to recalibrate a conformal predictor on a new distribution $\setQ$ based on unlabeled data. Given an unlabeled dataset $\DsetQ = \{\vx_1,\ldots,\vx_n\}$ sampled from the target distribution $\setQ$, our goal is to provide an accurate estimate $\hat{\tau}^{\setQ}$ for the threshold $\tau^{\setQ}$. 
Recall that the threshold $\tau^{\setQ}$ is so that the conformal predictor with set function $\setC (\vx, u, \tau^{\setQ})$ achieves the desired coverage of $1-\alpha$ on the target distribution $\setQ$. 
Thus, in other words, our goal is to estimate a threshold $\hat{\tau}^{\setQ}$ so that the set $\setC (\vx, u, \hat{\tau}^{\setQ})$ achieves close to the desired coverage of $1-\alpha$ on the target distribution, based on the unlabeled dataset only. 


In general, it is impossible to guarantee coverage since conformal prediction relies on exchangeability assumptions which can not be guaranteed in practice for new datasets~\citep{vovk2005AlgorithmicLearningRandom, romano2020ClassificationValidAdaptive, angelopoulos2020UncertaintySetsImage,cauchois2020RobustValidationConfident, bates2021DistributionFreeRiskControllingPrediction}. 
However, we will see that we can consistently estimate the threshold $\tau^\setQ$ for a variety of natural distribution shifts. 

We refer to the difference between the target coverage of $1-\alpha$ and the actual coverage achieved on a given distribution without any recalibration efforts as the \emph{coverage gap}. We assess how effective a recalibration method is based on the reduction of the coverage gap after recalibration. 

\section{Methods}
In this section we introduce our calibration method, termed Quantile Thresholded Confidence (QTC), along with baseline methods we consider in our experiments.  

\begin{figure}
    \begin{center} 
    \begin{tikzpicture}

\def\orx{0.0};
\def\ory{0.0};
\def\recoffset{0.5};

\def\recS{2.0};  
\def\recLx{8.8};  
\def\recLy{2.6};  

\def\linx{-1.0};
\def\linty{1.5};
\def\linby{0.5};

\def\arrsz{1.0}

\begin{scope}[
scale=0.80,
xshift=1.5cm,
yshift=2.0cm,
>=latex,
scale=1,
]

\draw [->] (\linx,\linty) node [left] {$\setD^\setP$} -- (\orx,\linty);
\draw [->] (\linx,\linby) node [left] {$\alpha$} -- (\orx,\linby);

\draw [draw=black, rounded corners] (\orx,\ory) rectangle (\recS,\recS);
\node [anchor=north west, yshift=0.2cm, xshift=-0.1cm, text width=1.5cm, align=center] at (\orx, \linty) {\scriptsize Conformal calibration \eqref{eq:tau-calibrated}};

\coordinate (rcendb) at ($(\orx,\linby) + (\recS,0.0)$);
\coordinate (rcendt) at ($(\orx,\linty) + (\recS,0.0)$);
\coordinate (nrect) at ($(rcendb) + \arrsz*(2.0,0.0)$);

\draw [->] (rcendb) -- (nrect) node [midway,fill=white] {$\tau_\alpha^\setP$};
\draw [->] ($(rcendt) + \arrsz*(1.0,1.0)$) node [above] {$\vx \sim \setQ$} -- ($(rcendt) + \arrsz*(1.0,0.0)$) |- ($(rcendt) + \arrsz*(2.0,0.0)$);

\draw [draw=black, rounded corners] (nrect |- 52,\ory) rectangle ($(nrect |- 52,\ory) + (\recS,\recS)$);
\node [anchor=north west, yshift=0.2cm, xshift=-0.1cm, text width=1.5cm, align=center] at (nrect |- 52,\linty) {\scriptsize Conformal inference \eqref{eq:tps-set-function}};

\coordinate (fend) at ($(nrect) + 1.0*(\recS,0.0) + 0.5*(0.0,\linty) - 0.5*(0.0,\linby)$);
\draw [->] (fend) -- ($(fend) + \arrsz*(1.0,0.0)$) node [right] {$\setC (\vx, \tau_\alpha^\setP)$};

\end{scope}

\begin{scope}[
scale=0.80,
xshift=-1.0cm,
yshift=-2.0cm,
>=latex,
scale=1,
]


\draw [draw=black, dashed, rounded corners] ($(\orx,\ory) - \recoffset*(1.0,0.6)$) rectangle (\recLx,\recLy);
\node [anchor=north west, text width=1.5cm] at ($(\orx,\recLy) - (\recoffset,0.0)$) {\scriptsize \bf QTC};

\draw [->] (\linx,\linty) node [left] {$\setD^\setQ, \setD^\setP$} -- (\orx,\linty);
\draw [->] (\linx,\linby) node [left] {$\alpha$} -- (\orx,\linby);

\draw [draw=black!40, rounded corners] (\orx,\ory) rectangle (\recS,\recS);
\node [anchor=north west, yshift=0.2cm, xshift=-0.1cm, text width=1.5cm, align=center] at (\orx, \linty) {\scriptsize QTC calibration \eqref{eq:qtc_threshold}};

\coordinate (rcendb) at ($(\orx,\linby) + (\recS,0.0)$);
\coordinate (rcendt) at ($(\orx,\linty) + (\recS,0.0)$);
\coordinate (nrect) at ($(rcendb) + \arrsz*(1.6,0.0)$);

\draw [->] (rcendb) -- (nrect) node [midway,fill=white] {\tiny $q (\setD, \alpha)$};
\draw [->] ($(rcendt) + \arrsz*(0.8,1.2)$) node [above] {$\setD^\setP, \setD^\setQ$} -- ($(rcendt) + \arrsz*(0.8,0.0)$) |- ($(rcendt) + \arrsz*(1.6,0.0)$);

\draw [draw=black!40, rounded corners] (nrect |- 52,\ory) rectangle ($(nrect |- 52,\ory) + (\recS,\recS)$);
\node [anchor=north west, yshift=0.2cm, xshift=-0.1cm, text width=1.5cm, align=center] at (nrect |- 52,\linty) {\scriptsize QTC \\ estimate \eqref{eq:qtc_target_prediction}};

\coordinate (ccin) at ($(rcendt) + \arrsz*(2.4,0.7) + (\recS,0.0)$);
\coordinate (ccinarr) at ($(ccin) + (0.4,0.0)$);

\draw [->] ($(rcendt) + \arrsz*(0.8,0.7)$) -- (ccin) |- (ccin |- rcendt) |- (ccinarr |- rcendt);
\draw [->] ($(nrect) + (\recS,0.0)$) -- ($(nrect) + (\recS,0.0) + \arrsz*(1.2,0.0)$) coordinate (qiend) node [midway,fill=white] {$\hat \beta$};

\draw [draw=black!40, rounded corners] (qiend |- 52,\ory) rectangle ($(qiend |- 52,\ory) + (\recS,\recS)$);
\node [anchor=north west, yshift=0.2cm, xshift=-0.1cm, text width=1.5cm, align=center] at (qiend |- 52,\linty) {\scriptsize Conformal calibration \eqref{eq:tau-calibrated}};

\coordinate (ccendb) at ($(qiend) + (\recS,0.0)$);
\coordinate (lrect) at ($(ccendb) + \arrsz*(1.4,0.0)$);

\draw [->] (ccendb) -- (lrect) coordinate (qtcend) node [midway,fill=white] {$\hat\tau_\alpha^\setQ$};
\draw [->] ($(ccendb |- 52,\linty) + \arrsz*(0.8,1.2)$) node [above] {$\vx \sim \setQ$} -- ($(ccendb |- 52,\linty) + \arrsz*(0.8,0.0)$) |- ($(ccendb |- 52,\linty) + \arrsz*(1.4,0.0)$);

\draw [draw=black, rounded corners] (qtcend |- 52,\ory) rectangle ($(qtcend |- 52,\ory) + (\recS,\recS)$);
\node [anchor=north west, yshift=0.2cm, xshift=-0.1cm, text width=1.5cm, align=center] at (qtcend |- 52,\linty) {\scriptsize Conformal inference \eqref{eq:tps-set-function}};

\coordinate (qtcfend) at ($(qtcend) + (\recS,0.0) + 0.5*(0.0,\linty) - 0.5*(0.0,\linby)$);
\draw [->] (qtcfend) -- ($(qtcfend) + \arrsz*(1.0,0.0)$) node [right] {$\setC (\vx, \hat\tau_\alpha^\setQ)$};

\end{scope}

\end{tikzpicture}
    \end{center}
    \vspace{-0.2cm}
\caption[caption]{
    \label{fig:qtc_illustration}
    {\bf Top}: Vanilla conformal prediction. {\bf Bottom}: QTC recalibration.
    QTC encapsulates the conformal calibration process to recalibrate the conformal predictor for each new distribution without altering the underlying set generating function. $\DsetQ$ is the unlabeled test set and $\setD^\setP$ is the labeled training/calibration set. QTC finds a threshold on the scores of the model on the unlabeled samples and predicts the coverage level by utilizing how the distribution of the scores changes across test distribution with respect to this threshold.
    }
\end{figure}
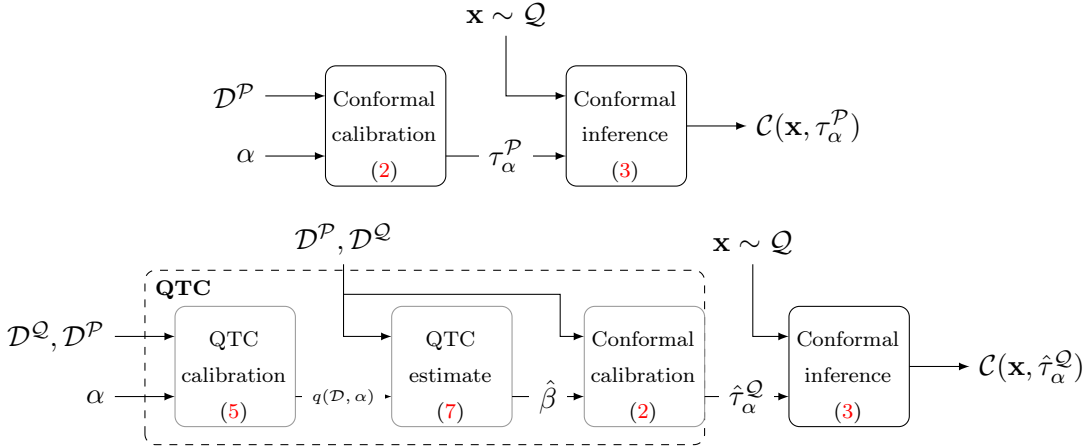

\subsection{Quantile thresholded confidence}
Consider a conformal predictor with threshold $\tau^\setP_\alpha$ calibrated so that the  conformal predictor achieves coverage $1-\alpha$ on the source distribution $\setP$. 
On a different distribution $\setQ$ the coverage of the conformal predictor is off. 
But there is a value $\beta$ such that, if we calibrate the conformal predictor on the \emph{source distribution} using the value $\beta$ instead of $\alpha$, it achieves $1-\alpha$ coverage on the \emph{target distribution}, i.e., the corresponding thresholds obey 
$
\tau^\setP_{\beta}
=
\tau^\setQ_\alpha
$.

Our method first estimates the value $\beta$ based on unlabeled examples. From the estimate $\hat \beta$, we estimate $\tau^\setQ_{\alpha}$ based on computing the  threshold $\tau^\setP_{\hat \beta}$ by calibrating the conformal predictor on the source calibration set using $\hat \beta$. This yields a threshold close to the desired one, i.e., $\tau^\setP_{\hat \beta} \approx \tau^\setQ_{\alpha}$. 

\paragraph{Step 1, estimation of $\beta$:}
We are given a labeled source dataset $\DsetP$ and an unlabeled target dataset $\DsetQ$.
Our estimate of $\beta$ relies on the quantile function
\begin{align}
\label{eq:qtc_threshold}
    q (\setD, c)
    = 
    \inf \left\{ 
        p \colon \frac{1}{\lvert \setD \rvert}
        \sum_{\vx \in  \setD}  \ind{s (\pi(\vx)) < p}
        \geq
        c
    \right\}.
\end{align}
The quantile function depends on the classifier's predictions through a score function $s(\pi(\vx)) = \max_\ell \pi_\ell (\vx)$, which we take as the largest softmax score of the classifier's predictions. Here, $\setD$ is a set of unlabeled examples and $c \in [0, 1]$ is a scalar. 
Our method first identifies a threshold based on the unlabeled target dataset $\DsetQ$ for a desired coverage level $\alpha$ in expression~\eqref{eq:qtc_threshold} by computing $q (\DsetQ, \alpha)$. 
%
%
Since this process is identical to finding the $(\alpha)^{th}$ quantile of the scores on the dataset, we dub the method Quantile Thresholded Confidence (QTC). 
QTC estimates $\beta$ as 
\begin{align}
\label{QTC:aggregate}
\beta_\QTC = \min(\beta_\QTCT, \beta_\QTCS),
\end{align}
where the QTC-Target and QTC-Source estimates are
\begin{align}
\label{eq:qtc_target_prediction}
    \beta_\QTCT ( \DsetQ )
    &= 
    \frac{1}{ |\DsetP| }
    \sum_{\vx \in \DsetP }
    \ind{ s(\pi(\vx)) <  q (\DsetQ, \alpha) }\\
    \label{eq:qtcs_prediction}
    \beta_\QTCS (\DsetQ)
    &=
    1 - \frac{1}{ |\DsetQ| }
    \sum_{\vx \in \DsetQ }
    \ind{ s(\pi(\vx)) <  q (\DsetP, 1-\alpha)}.
\end{align}
We consider two estimates for $\beta$ and aggregate them to a single value by taking the minimum of the two. This yields best performance, as demonstrated by studying the three versions of QTC, corresponding to the three estimates~\eqref{QTC:aggregate}, \eqref{eq:qtc_target_prediction}, and \eqref{eq:qtcs_prediction}. 

The reasons for having two estimates and aggregating them is as follows. DNNs have a tendency to be over-confident in their predictions~\citep{guo2017CalibrationModernNeural}. If the distribution of the softmax scores over the dataset is not sufficiently smooth in the lower-confidence regime, the QTC-T estimate might be inaccurate. In this higher-confidence regime QTC-S provides a better estimate. The minimum of the two provides a good estimate in the high and low confidence regions.

%
\paragraph{Step 2, estimation of the threshold $\tau^\setQ_{\alpha}$ based on $\beta$:} QTC predicts the conformal threshold $\tau_\alpha^\setQ$ by conformal calibration with target value $\beta_\QTC$. 
Specifically, we calibrate the conformal predictor on the dataset $\DsetP$ as
\begin{align}
    \label{eq:qtc_tau_prediction}
    \tau_\QTC
    =
    \inf \left\{
        \tau : \vert \{ i: y_i \in \setC (\vx_i, u_i, \tau) \} \vert
        \geq
        (1 - \beta_\QTC) (|\DsetP| + 1)
    \right\},
\end{align}
which yields the estimate $\tau_\QTC$ for $\tau^\setQ_{\alpha}$. 
QTC is illustrated in Figure~\ref{fig:qtc_illustration}.

QTC is inspired by a method for predicting a classifier's accuracy  from~\citet{garg2021LeveragingUnlabeledData}. \citet{garg2021LeveragingUnlabeledData}'s method finds a threshold on the 
scores matching the accuracy of a classifier on the dataset and predicts the accuracy on other datasets. 
Contrary, we predict the threshold of a conformal predictor, and our method is based on predicting an auxillary parameter $\beta$ instead of a threshold directly. 

\subsection{Baseline methods}

We consider regression-based methods as baselines. 
Regression-based methods have been used for predicting classification accuracy, assuming a correlation between the classification accuracy and a feature (e.g., average confidence) across different distributions~\citep{deng2021WhatDoesRotation,deng2021AreLabelsAlways, guillory2021PredictingConfidenceUnseen}. 
We consider regression-based methods as baselines for predicting the conformal threshold on a target distribution that would achieve $1-\alpha$ coverage.
We train the regression-based methods on a dataset consisting of synthetically generated distributions given a source distribution (e.g. ImageNet-C from ImageNet) with the goal of predicting the conformal threshold for a test dataset sampled from a natural distribution.

Let $\phi_\pi (\setD) \colon \reals^L \rightarrow \reals^d$ be  the feature extractor part of a neural network that maps the softmax scores of the classifier to the features for a given dataset $\setD$. 
A simple example is the one-dimensional feature ($d=1$) extracted by computing the average confidence of a given classifier across the examples of a given dataset.

We fit a regression function $f_{\theta}$ parameterized by different feature extractors $\phi_\pi$ by minimizing the mean squared error between the output and the calibrated threshold $\tau$ across the distributions as
\begin{align}
\label{eq:regression_estimator}
    \hat \theta
    = 
    \arg \min_\theta \sum_j ( f_\theta ( \phi_\pi (\setD_j)) - \tau^{\setP_j} )^2.
\end{align}
We consider the following choices for the feature extractor $\phi_\pi$ (see App~\ref{app:baseline_regression} for details):
\begin{itemize}
\item 
\emph{Average confidence regression (ACR)}: 
The average confidence of the classifier across the entire dataset.
\item 
\emph{Difference of confidence regression (DCR)}~\citep{guillory2021PredictingConfidenceUnseen}:
The average confidence of the classifier across the entire dataset offset by the average confidence on the source dataset.
    Prediction is also for the offset target $\tau - \tau^\setP$.
DCR performs better than ACR for predicting a classifier's  accuracy~\citep{guillory2021PredictingConfidenceUnseen}. 
    %
\item 
\emph{Confidence histogram-density regression (CHR)}: 
Normalized histogram density of the classifier confidence across the dataset, where the feature dimension is controlled by a hyperparameter that determines the number of histogram bins in the probability range $[0,1]$.
    Neural networks tend to be overconfident in their prediction which heavily skews the histogram densities to the last bin.
We also therefore consider a variant of CHR, \emph{dubbed CHR-}, where we drop the last bin of the histogram as a feature.
\item 
\emph{Predicted class-wise average confidence regression (PCR)}: 
Class-wise (by predicted class) average confidence of the classifier across the samples.
\end{itemize}

\begin{figure}[t]
    \begin{center}
    \scalebox{1.0}{%
    \begin{tikzpicture}

    \def \targetcov {0.9}
    \definecolor{targetcovcolor}{RGB}{46,105,33}

    \pgfplotstableread[col sep=comma]{./tables/tps_bar_plots_ablation_plus_QTC.csv}{\datatablebarplot}
    
    
    \begin{groupplot}[
    width=5.5cm,
    height=4.5cm,
    group style={
        group name=coverage_vs_size,
        group size=3 by 2,
        xlabels at=edge bottom,
        ylabels at=edge left,
        horizontal sep=1cm, 
        vertical sep=2cm,
        },
    legend cell align={left},
    legend columns=1,
    legend style={
                at={(0,1)}, 
                anchor=north west,
                draw=black!50,
                fill=none,
                font=\small
            },
    /tikz/every even column/.append style={column sep=0.1cm},
    /tikz/every row/.append style={row sep=0.3cm},
    colormap name=viridis,
    cycle list={[colors of colormap={50,350,750}]},
    tick align=outside,
    tick pos=left,
    x grid style={white!69.01960784313725!black},
    xtick style={color=black},
    ylabel={achieved coverage},
    ylabel style={at={(-0.25,0.5)}},
    every axis plot/.append style={thick},
    ytick={\empty},
    y grid style={thick,dotted,draw=red},
    xtick=data,
    xticklabels from table={\datatablebarplot}{Method},
    x tick label style={rotate=60,anchor=east,scale=0.6},
    y tick label style={font=\scriptsize},
    title style={at={(0.5, 0.92)}},
    ]
    
    
    \nextgroupplot[title={ImageNetV2},
    extra y ticks = {0.812},
    extra y tick style={grid=major,major grid style={thick,dotted,draw=black}},
    ymin=0.802, ymax=0.9,
    ybar,
    bar width=6pt,
    enlargelimits=0.15,
    nodes near coords,
    nodes near coords style={rotate=90,anchor=east,font=\scriptsize},
    nodes near coords align={vertical},
    after end axis/.code={
        \draw [targetcovcolor, thick, dashed] (axis cs:\pgfkeysvalueof{/pgfplots/xmax},\targetcov) -- (axis cs:\pgfkeysvalueof{/pgfplots/xmin},\targetcov) node [black, left, rotate=0, anchor=east] {\scriptsize $0.9$};
    },
    ]

    \addplot [fill=blue!30!white,select coords between index={0}{7}] table[x expr=\coordindex,y expr=\thisrow{predicted_coverage}, col sep=comma]{\datatablebarplot};


    \nextgroupplot[title={ImageNet Sketch},
    extra y ticks = {0.38},
    extra y tick style={grid=major,major grid style={thick,dotted,draw=black}},
    ymin=0.35, ymax=0.9,
    ybar,
    bar width=6pt,
    enlargelimits=0.15,
    nodes near coords,
    nodes near coords style={rotate=90,anchor=east,font=\scriptsize},
    nodes near coords align={vertical},
    after end axis/.code={
        \draw [targetcovcolor, thick, dashed] (axis cs:\pgfkeysvalueof{/pgfplots/xmax},\targetcov) -- (axis cs:\pgfkeysvalueof{/pgfplots/xmin},\targetcov) node [black, left, rotate=0, anchor=east] {\scriptsize $0.9$};
    },
    ]

    \addplot [fill=blue!30!white,select coords between index={8}{15}] table[x expr=\coordindex,y expr=\thisrow{predicted_coverage}, col sep=comma]{\datatablebarplot};


    \nextgroupplot[title={ImageNet-R},
    extra y ticks = {0.343},
    extra y tick style={grid=major,major grid style={thick,dotted,draw=black}},
    ymin=0.28, ymax=0.9,
    ybar,
    bar width=6pt,
    enlargelimits=0.15,
    nodes near coords,
    nodes near coords style={rotate=90,anchor=east,font=\scriptsize},
    nodes near coords align={vertical},
    after end axis/.code={
        \draw [targetcovcolor, thick, dashed] (axis cs:\pgfkeysvalueof{/pgfplots/xmax},\targetcov) -- (axis cs:\pgfkeysvalueof{/pgfplots/xmin},\targetcov) node [black, left, rotate=0, anchor=east] {\scriptsize $0.9$};
    },
    ]

    \addplot [fill=blue!30!white,select coords between index={16}{23}] table[x expr=\coordindex,y expr=\thisrow{predicted_coverage}, col sep=comma]{\datatablebarplot};


    \nextgroupplot[title={Entity-13},
    extra y ticks = {0.613},
    extra y tick style={grid=major,major grid style={thick,dotted,draw=black}},
    ymin=0.6, ymax=0.93,
    ybar,
    bar width=6pt,
    enlargelimits=0.15,
    nodes near coords,
    nodes near coords style={rotate=90,anchor=east,font=\scriptsize},
    nodes near coords align={vertical},
    after end axis/.code={
        \draw [targetcovcolor, thick, dashed] (axis cs:\pgfkeysvalueof{/pgfplots/xmax},\targetcov) -- (axis cs:\pgfkeysvalueof{/pgfplots/xmin},\targetcov) node [black, left, rotate=0, anchor=east] {\scriptsize $0.9$};
    },
    ]

    \addplot [fill=blue!30!white,select coords between index={40}{47}] table[x expr=\coordindex,y expr=\thisrow{predicted_coverage}, col sep=comma]{\datatablebarplot};


    \nextgroupplot[title={Entity-30},
    extra y ticks = {0.574},
    extra y tick style={grid=major,major grid style={thick,dotted,draw=black}},
    ymin=0.55, ymax=0.93,
    ybar,
    bar width=6pt,
    enlargelimits=0.15,
    nodes near coords,
    nodes near coords style={rotate=90,anchor=east,font=\scriptsize},
    nodes near coords align={vertical},
    after end axis/.code={
        \draw [targetcovcolor, thick, dashed] (axis cs:\pgfkeysvalueof{/pgfplots/xmax},\targetcov) -- (axis cs:\pgfkeysvalueof{/pgfplots/xmin},\targetcov) node [black, left, rotate=0, anchor=east] {\scriptsize $0.9$};
    },
    ]

    \addplot [fill=blue!30!white,select coords between index={32}{39}] table[x expr=\coordindex,y expr=\thisrow{predicted_coverage}, col sep=comma]{\datatablebarplot};


    \nextgroupplot[title={Living-17},
    extra y ticks = {0.554},
    extra y tick style={grid=major,major grid style={thick,dotted,draw=black}},
    ymin=0.534, ymax=0.93,
    ybar,
    bar width=6pt,
    enlargelimits=0.15,
    nodes near coords,
    nodes near coords style={rotate=90,anchor=east,font=\scriptsize},
    nodes near coords align={vertical},
    after end axis/.code={
        \draw [targetcovcolor, thick, dashed] (axis cs:\pgfkeysvalueof{/pgfplots/xmax},\targetcov) -- (axis cs:\pgfkeysvalueof{/pgfplots/xmin},\targetcov) node [black, left, rotate=0, anchor=east] {\scriptsize $0.9$};
    },
    ]

    \addplot [fill=blue!30!white,select coords between index={24}{31}] table[x expr=\coordindex,y expr=\thisrow{predicted_coverage}, col sep=comma]{\datatablebarplot};


    \end{groupplot}
    
    \end{tikzpicture}}
    \end{center}
    \vspace{-0.4cm}
    \caption{\label{fig:predicted_coverage_bar_plots}
    Coverage obtained by TPS for a desired coverage of $1-\alpha=0.9$ on the target distribution $\setQ$ after recalibration using the unlabeled samples from $\setQ$ for various recalibration methods. The dotted line is the coverage without recalibration, and the dashed line is the target coverage $1-\alpha=0.9$. QTC almost fully close the coverage gap across ImageNet and BREEDS test distribution shifts. QTC performs as well as the best of the ablation methods QTC-S and QTC-T, which illustrates why it is necessary to aggregate the QTC-S and QTC-T estimates for $\beta$ to a single number as QTC does.
    }
\end{figure}
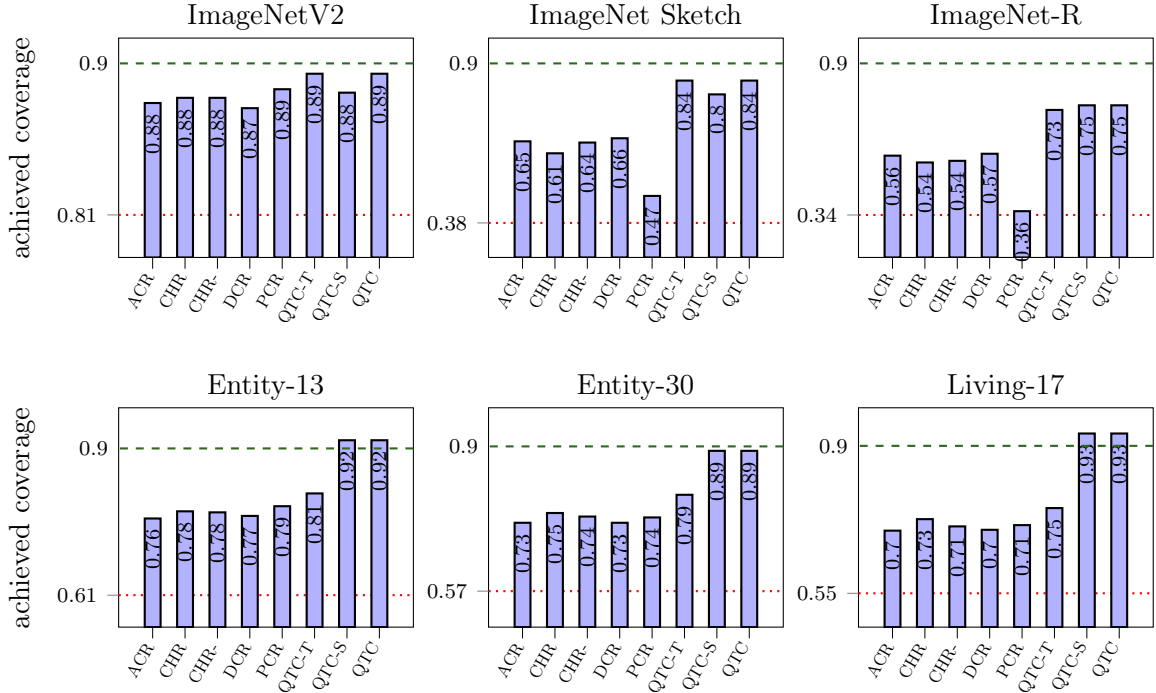

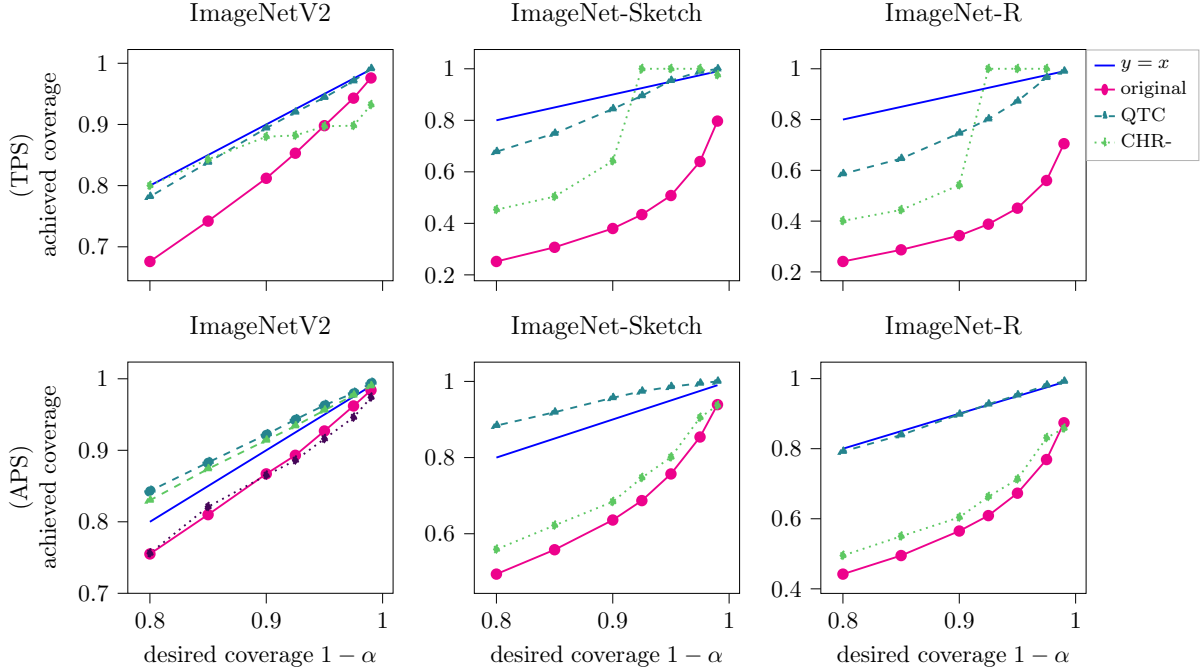
\begin{figure}[t]
    \begin{center}
    \scalebox{0.9}{%
    \begin{tikzpicture}
    
    
    \begin{groupplot}[
    width=5.5cm,
    height=5cm,
    group style={
        group name=aps_vs_alpha,
        group size=3 by 2,
        xlabels at=edge bottom,
        ylabels at=edge left,
        xticklabels at=edge bottom,
        horizontal sep=1.2cm, 
        vertical sep=1.2cm,
        },
    legend cell align={left},
    legend columns=1,
    legend style={
                at={(1,1)}, 
                anchor=north west,
                draw=black!30,
                fill=none,
                font=\scriptsize,
                /tikz/every even column/.append style={column sep=0.5cm}
            },
    legend image post style={xscale=0.5},
    /tikz/every even column/.append style={column sep=0.1cm},
    /tikz/every row/.append style={row sep=0.3cm},
    colormap name=viridis,
    cycle list={[colors of colormap={10,250,450,750}]},
    tick align=outside,
    tick pos=left,
    x grid style={white!69.01960784313725!black},
    xtick style={color=black},
    y grid style={white!69.01960784313725!black},
    ytick style={color=black},
    xlabel={desired coverage $1-\alpha$}, 
    ylabel={achieved coverage},
    xlabel style={font=\small},
    ylabel style={font=\small, align=center},
    x tick label style={font=\small},
    y tick label style={font=\small},
    every axis plot/.append style={thick},
    ]
    
    
    \nextgroupplot[ylabel={(TPS)\\achieved coverage}, title={ImageNetV2}]

    \addplot +[thick, blue, select coords between index={42}{48}] table[x expr=1-\thisrow{alpha},y expr=1-\thisrow{alpha}, col sep=comma]{./tables/imagenet_curves_vs_alpha.csv};

    \addplot [magenta, thick, mark=*, mark options={scale=1.0}] table[x expr=1-\thisrow{alpha},y expr=\thisrow{original_coverage}, col sep=comma, select coords between index={42}{48}]{./tables/imagenet_curves_vs_alpha.csv};

    \addplot +[dashed, mark=triangle*, mark options={scale=1.0}] table[x expr=1-\thisrow{alpha},y expr=\thisrow{predicted_coverage}, col sep=comma, select coords between index={42}{48}]{./tables/imagenet_curves_vs_alpha.csv};



    \addplot +[dotted, mark=diamond*, mark options={scale=1.0}] table[x expr=1-\thisrow{alpha},y expr=\thisrow{predicted_coverage}, col sep=comma, select coords between index={21}{27}]{./tables/imagenet_curves_vs_alpha.csv};

    \nextgroupplot[title={ImageNet-Sketch}]

    \addplot +[thick, blue, select coords between index={252}{258}] table[x expr=1-\thisrow{alpha},y expr=1-\thisrow{alpha}, col sep=comma]{./tables/imagenet_curves_vs_alpha.csv};

    \addplot [magenta, thick, mark=*, mark options={scale=1.0}] table[x expr=1-\thisrow{alpha},y expr=\thisrow{original_coverage}, col sep=comma, select coords between index={252}{258}]{./tables/imagenet_curves_vs_alpha.csv};

    \addplot +[dashed, mark=triangle*, mark options={scale=1.0}] table[x expr=1-\thisrow{alpha},y expr=\thisrow{predicted_coverage}, col sep=comma, select coords between index={252}{258}]{./tables/imagenet_curves_vs_alpha.csv};



    \addplot +[dotted, mark=diamond*, mark options={scale=1.0}] table[x expr=1-\thisrow{alpha},y expr=\thisrow{predicted_coverage}, col sep=comma, select coords between index={231}{237}]{./tables/imagenet_curves_vs_alpha.csv};

    \nextgroupplot[title={ImageNet-R}]

    \addplot +[thick, blue, select coords between index={462}{468}] table[x expr=1-\thisrow{alpha},y expr=1-\thisrow{alpha}, col sep=comma]{./tables/imagenet_curves_vs_alpha.csv};
    \addlegendentry{$y=x$}

    \addplot [magenta, thick, mark=*, mark options={scale=1.0}] table[x expr=1-\thisrow{alpha},y expr=\thisrow{original_coverage}, col sep=comma, select coords between index={462}{468}]{./tables/imagenet_curves_vs_alpha.csv};
    \addlegendentry{original};


    \addplot +[dashed, mark=triangle*, mark options={scale=1.0}] table[x expr=1-\thisrow{alpha},y expr=\thisrow{predicted_coverage}, col sep=comma, select coords between index={469}{475}]{./tables/imagenet_curves_vs_alpha.csv};
    \addlegendentry{QTC};


    \addplot +[dotted, mark=diamond*, mark options={scale=1.0}] table[x expr=1-\thisrow{alpha},y expr=\thisrow{predicted_coverage}, col sep=comma, select coords between index={442}{447}]{./tables/imagenet_curves_vs_alpha.csv};

    \addlegendentry{CHR-};







    \nextgroupplot[ymin=0.7, ylabel={(APS)\\achieved coverage}, title={ImageNetV2}]

    \addplot +[thick, blue, select coords between index={245}{251}] table[x expr=1-\thisrow{alpha},y expr=1-\thisrow{alpha}, col sep=comma]{./tables/natural_datasets_aps_vs_alpha.csv};

    \addplot [magenta, thick, mark=*, mark options={scale=1.0}] table[x expr=1-\thisrow{alpha},y expr=\thisrow{original_coverage}, col sep=comma, select coords between index={245}{251}]{./tables/natural_datasets_aps_vs_alpha.csv};

    \addplot +[dashed, mark=*, mark options={scale=1.0}] table[x expr=1-\thisrow{alpha},y expr=\thisrow{predicted_coverage}, col sep=comma, select coords between index={182}{188}]{./tables/imagenet_curves_vs_alpha.csv};

    \addplot +[dashed, mark=triangle*, mark options={scale=1.0}] table[x expr=1-\thisrow{alpha},y expr=\thisrow{predicted_coverage}, col sep=comma, select coords between index={189}{195}]{./tables/imagenet_curves_vs_alpha.csv};


    \addplot +[dotted, mark=diamond*, mark options={scale=1.0}] table[x expr=1-\thisrow{alpha},y expr=\thisrow{predicted_coverage}, col sep=comma, select coords between index={224}{230}]{./tables/natural_datasets_aps_vs_alpha.csv};

    \nextgroupplot[title={ImageNet-Sketch}]

    \addplot +[thick, blue, select coords between index={161}{167}] table[x expr=1-\thisrow{alpha},y expr=1-\thisrow{alpha}, col sep=comma]{./tables/natural_datasets_aps_vs_alpha.csv};

    \addplot [magenta, thick, mark=*, mark options={scale=1.0}] table[x expr=1-\thisrow{alpha},y expr=\thisrow{original_coverage}, col sep=comma, select coords between index={161}{167}]{./tables/natural_datasets_aps_vs_alpha.csv};

    \addplot +[dashed, mark=triangle*, mark options={scale=1.0}] table[x expr=1-\thisrow{alpha},y expr=\thisrow{predicted_coverage}, col sep=comma, select coords between index={392}{398}]{./tables/imagenet_curves_vs_alpha.csv};



    \addplot +[dotted, mark=diamond*, mark options={scale=1.0}] table[x expr=1-\thisrow{alpha},y expr=\thisrow{predicted_coverage}, col sep=comma, select coords between index={140}{146}]{./tables/natural_datasets_aps_vs_alpha.csv};


    \nextgroupplot[title={ImageNet-R}]

    \addplot +[thick, blue, select coords between index={77}{83}] table[x expr=1-\thisrow{alpha},y expr=1-\thisrow{alpha}, col sep=comma]{./tables/natural_datasets_aps_vs_alpha.csv};

    \addplot [magenta, thick, mark=*, mark options={scale=1.0}] table[x expr=1-\thisrow{alpha},y expr=\thisrow{original_coverage}, col sep=comma, select coords between index={77}{83}]{./tables/natural_datasets_aps_vs_alpha.csv};


    \addplot +[dashed, mark=triangle*, mark options={scale=1.0}] table[x expr=1-\thisrow{alpha},y expr=\thisrow{predicted_coverage}, col sep=comma, select coords between index={609}{615}]{./tables/imagenet_curves_vs_alpha.csv};


    \addplot +[dotted, mark=diamond*, mark options={scale=1.0}] table[x expr=1-\thisrow{alpha},y expr=\thisrow{predicted_coverage}, col sep=comma, select coords between index={56}{62}]{./tables/natural_datasets_aps_vs_alpha.csv};







    \end{groupplot}
    \end{tikzpicture}}
    \end{center}
    \vspace{-0.2cm}
    \caption{\label{fig:coverage_vs_alpha}
        Coverage obtained by TPS and APS on the target distribution $\setQ$ as a function of the desired coverage (i.e., $1-\alpha$) after recalibration with the respective prediction method.
        For regression methods, only the best performing method, CHR-, is shown. QTC significantly closes the coverage gap across the range of $1-\alpha$, while CHR- yields inconsistent or insufficient performance improvements.
    }
\end{figure}

\section{Experiments}
\label{sec:experimental_results}

We study the performance of QTC on natural distribution shifts and on an artifical covariate shift.

\subsection{Natural distribution shifts}

We consider the following choices for the source distribution $\setP$ and associated natural distribution shifts:

\paragraph{ImageNet~\citep{deng2009ImageNetLargescaleHierarchical} distribution shifts:}  
In our ImageNet experiments, ImageNet is the source distribution $\setP$ and  the following natural distribution shifts are the target distributions $\setQ$: 
\begin{itemize}
\item {\bf ImageNetV2}~\citep{recht2019ImageNetClassifiersGeneralize} was constructed by following the same procedure as for constructing and labeling the original ImageNet dataset. However, all standard models perform significantly worse on ImageNetV2 relative to the original ImageNet test set. 
\item {\bf ImageNet-Sketch}~\citep{wang2019LearningRobustGlobal} contains sketch-like images of the objects in the original ImageNet, but otherwise matches the original categories and scales.
\item {\bf ImageNet-R}~\citep{hendrycks2021ManyFacesRobustness} contains artwork images of the ImageNet class objects found in the web. ImageNet-R only contains images for a 200-class subset of the original ImageNet. We don't limit our experiments to this subset but instead consider the adverse setting of calibrating on all 1000 classes since our main goal is to provide an end-to-end solution for recalibration of the conformal predictors and we are interested in how well our method performs against possible adversaries such as dataset imbalance that can be encountered in practice.
\end{itemize}

\paragraph{BREEDS~\citep{santurkar2020BREEDSBenchmarksSubpopulation} distribution shifts: }
The BREEDS datasets feature \emph{sub-population shifts} from the training set to test. 
The BREEDS datasets were constructed using the existing ImageNet images, but with different classes. BREEDS utilizes the hierarchical WordNet structure of the classes to choose a parent class that makes the original ImageNet classes the leaves.  
For example, in the BREEDS Living-17 dataset, one of the classes is \emph{domestic cat}. This is a parent class of several ImageNet classes, which are \emph{tiger cat, Egyptian cat, Persian cat and Siamese cat}. 
BREEDS induces a subpopulation shift from the source distribution to the target by assigning these leaf classes to either the source or target.
For example, the images in the source dataset of Living-17 under the \emph{domestic cat} class are that of either \emph{tiger cats} or \emph{Egyptian cats}, whereas in the target are that of either \emph{Persian cats} or \emph{Siamese cats}. 
Therefore, despite having the same label (\emph{domestic cat}), the source and target distributions semantically differ due to the differences between the breeds, which induces a subpopulation shift.

We consider three BREEDS datasets: Entity-13, Entity-30 and Living-17, which are named using the convention \emph{theme/object type}--\emph{\#classes}.


\paragraph{Experimental procedure.}
For the ImageNet experiments we use a ResNet-50 and DenseNet-121 pretrained on the ImageNet training set. 
For the BREEDS experiments, we train a ResNet-18 model from scratch for the BREEDS datasets. In both cases, the classifiers only see examples from the source distribution. 

For all experiments, we first calibrate the conformal predictor on the source distribution $\setP$ to find the cutoff threshold $\tau^\setP$. For QTC and variants, we find the threshold $q$ using the expression~\eqref{eq:qtc_threshold}. For the regression methods, we use the ImageNet-C dataset~\citep{hendrycks2019BenchmarkingNeuralNetwork} as the source of synthetic distributions, find the cutoff threshold $\tau$ for each of the distributions, and fit a regressor by minimizing the loss~\eqref{eq:regression_estimator}. For the regression function we use a 4-layer MLP with ReLU activations. ImageNet-C consists of 90 different distributions obtained by synthetically perturbing the images of ImageNet-Val for 18 different types of perturbations at 5 different levels of severity, resulting in 90 distinct distributions.

\paragraph{Recalibration experiments for a fixed target coverage.}
We first evaluate the recalibration methods for a fixed target coverage of $1-\alpha = 0.9$.  
The results in Figure~\ref{fig:predicted_coverage_bar_plots} for recalibrating TPS show that  QTC reduces the coverage gap much more than regression methods, and even closes it in some cases. 

We also display QTC-T and QTC-S as ablation studies. Here it can be seen that sometimes QTC-T and sometimes QTC-S performs best, which is why combining them is necessary. The different performance of QTC-T and QTC-S can be attributed to the difference of the type of shifts (e.g. semantic vs. subpopulation) between ImageNet and BREEDS. Note that QTC-T operates on the regime of samples with lower confidence whereas QTC-S on the higher confidence regime. Therefore, QTC-T may perform subpar compared to QTC-S for datasets consisting of fewer, more distinct classes like BREEDS, for which a well-trained classifier tends to assign high confidence to its predictions.

\paragraph{Recalibration experiments for different target coverage levels.}
The coverage gap (i.e., the difference of achieved coverage and targeted coverage) varies across the desired coverage level $1-\alpha$. We therefore next evaluate the performance as a function of the desired coverage level. 

Figure~\ref{fig:coverage_vs_alpha} shows the coverage obtained after recalibration with TPS and APS for different values of $1-\alpha$ for the natural distribution shifts from ImageNet. 
QTC closes the coverage gap significantly for all choices of $1-\alpha$, whereas the best performing regression-based baseline method, CHR-, fails to significantly improve the coverage gap consistently across all choices of $1-\alpha$. 

\begin{figure}[t]
    \begin{center}
    \scalebox{1.0}{%
    \begin{tikzpicture}
    
    
    \begin{groupplot}[
    width=5.5cm,
    height=4.5cm,
    group style={
        group name=aps_vs_alpha,
        group size=2 by 2,
        xlabels at=edge bottom,
        ylabels at=edge left,
        xticklabels at=edge bottom,
        horizontal sep=1.6cm, 
        vertical sep=0.7cm,
        },
    legend cell align={left},
    legend columns=1,
    legend style={
                at={(1,1)}, 
                anchor=north west,
                draw=black!30,
                fill=none,
                font=\scriptsize,
                /tikz/every even column/.append style={column sep=0.5cm}
            },
    legend image post style={xscale=0.5},
    /tikz/every even column/.append style={column sep=0.1cm},
    /tikz/every row/.append style={row sep=0.3cm},
    colormap name=viridis,
    cycle list={[colors of colormap={10,250,450,600,750}]},
    tick align=outside,
    tick pos=left,
    x grid style={white!69.01960784313725!black},
    xtick style={color=black},
    y grid style={white!69.01960784313725!black},
    ytick style={color=black},
    xlabel={desired coverage $1-\alpha$}, 
    ylabel={achieved coverage},
    xlabel style={font=\small},
    ylabel style={font=\small},
    x tick label style={font=\small},
    y tick label style={font=\small},
    every axis plot/.append style={thick},
    ]
    
    
    \nextgroupplot[title={$\setP = \text{DomainNetAll}$}]

    \addplot +[thick, blue, select coords between index={0}{6}] table[x expr=1-\thisrow{alpha},y expr=1-\thisrow{alpha}, col sep=comma]{./tables/domainnet_qtc_all2info.csv};

    \addplot [magenta, thick, mark=*, mark options={scale=1.0}] table[x expr=1-\thisrow{alpha},y expr=\thisrow{original_coverage}, col sep=comma, select coords between index={0}{6}]{./tables/domainnet_qtc_all2info.csv};

    \addplot +[dashed, mark=triangle*, mark options={scale=1.0}] table[x expr=1-\thisrow{alpha},y expr=\thisrow{predicted_coverage}, col sep=comma, select coords between index={0}{6}]{./tables/domainnet_qtc_all2info.csv};



    \addplot +[dotted, mark=diamond*, mark options={scale=1.0}] table[x expr=1-\thisrow{alpha},y expr=\thisrow{predicted_coverage}, col sep=comma, select coords between index={21}{27}]{./tables/domainnet_cov_all2info.csv};

    \addplot +[dashed, mark=*, mark options={scale=1.0}] table[x expr=1-\thisrow{alpha},y expr=\thisrow{predicted_coverage}, col sep=comma, select coords between index={28}{34}]{./tables/domainnet_cov_all2info.csv};

    \nextgroupplot[title={$\setP = \text{DomainNetReal}$}]

    \addplot +[thick, blue, select coords between index={0}{6}] table[x expr=1-\thisrow{alpha},y expr=1-\thisrow{alpha}, col sep=comma]{./tables/domainnet_qtc_all2info.csv};
    \addlegendentry{$y=x$}

    \addplot [magenta, thick, mark=*, mark options={scale=1.0}] table[x expr=1-\thisrow{alpha},y expr=\thisrow{original_coverage}, col sep=comma, select coords between index={0}{6}]{./tables/domainnet_qtc_real2info.csv};
    \addlegendentry{original};

    \addplot +[dashed, mark=triangle*, mark options={scale=1.0}] table[x expr=1-\thisrow{alpha},y expr=\thisrow{predicted_coverage}, col sep=comma, select coords between index={0}{6}]{./tables/domainnet_qtc_real2info.csv};
    \addlegendentry{QTC};



    \addplot +[dotted, mark=diamond*, mark options={scale=1.0}] table[x expr=1-\thisrow{alpha},y expr=\thisrow{predicted_coverage}, col sep=comma, select coords between index={21}{27}]{./tables/domainnet_cov_real2info.csv};
    \addlegendentry{PS-W};

    \addplot +[dashed, mark=*, mark options={scale=1.0}] table[x expr=1-\thisrow{alpha},y expr=\thisrow{predicted_coverage}, col sep=comma, select coords between index={28}{34}]{./tables/domainnet_cov_real2info.csv};
    \addlegendentry{WSCI};

    \nextgroupplot[ylabel={avg. set size}, ymode=log]

    \pgfplotsset{cycle list shift=2}

    \addplot +[dashed, mark=triangle*, mark options={scale=1.0}] table[x expr=1-\thisrow{alpha},y expr=\thisrow{predicted_size}, col sep=comma, select coords between index={0}{6}]{./tables/domainnet_qtc_all2info.csv};



    \addplot +[dotted, mark=diamond*, mark options={scale=1.0}] table[x expr=1-\thisrow{alpha},y expr=\thisrow{predicted_size}, col sep=comma, select coords between index={21}{27}]{./tables/domainnet_cov_all2info.csv};

    \addplot +[dashed, mark=*, mark options={scale=1.0}] table[x expr=1-\thisrow{alpha},y expr=\thisrow{predicted_size}, col sep=comma, select coords between index={28}{34}]{./tables/domainnet_cov_all2info.csv};

    \nextgroupplot[ymode=log]

    \pgfplotsset{cycle list shift=2}

    \addplot +[dashed, mark=triangle*, mark options={scale=1.0}] table[x expr=1-\thisrow{alpha},y expr=\thisrow{predicted_size}, col sep=comma, select coords between index={0}{6}]{./tables/domainnet_qtc_real2info.csv};



    \addplot +[dotted, mark=diamond*, mark options={scale=1.0}] table[x expr=1-\thisrow{alpha},y expr=\thisrow{predicted_size}, col sep=comma, select coords between index={21}{27}]{./tables/domainnet_cov_real2info.csv};

    \addplot +[dashed, mark=*, mark options={scale=1.0}] table[x expr=1-\thisrow{alpha},y expr=\thisrow{predicted_size}, col sep=comma, select coords between index={28}{34}]{./tables/domainnet_cov_real2info.csv};

    \end{groupplot}
    
    \end{tikzpicture}}
    \end{center}
    \caption{\label{fig:domainnet}
        Coverage ({\bf top row}) and the average set size ({\bf bottom row}) obtained by TPS on the target $\setQ = \text{DomainNet-Infograph}$ for various settings of ($1-\alpha$).
        For the setting where all domains are available for the discriminator ({\bf left}), WSCI closes the coverage gap while QTC considerably improves it; whereas when only DomainNet-Real is available, QTC slightly outperforms.
        In both settings, PS-W fails by constructing uninformatively large confidence sets for the range $1-\alpha > 0.9$.
    }
\end{figure}
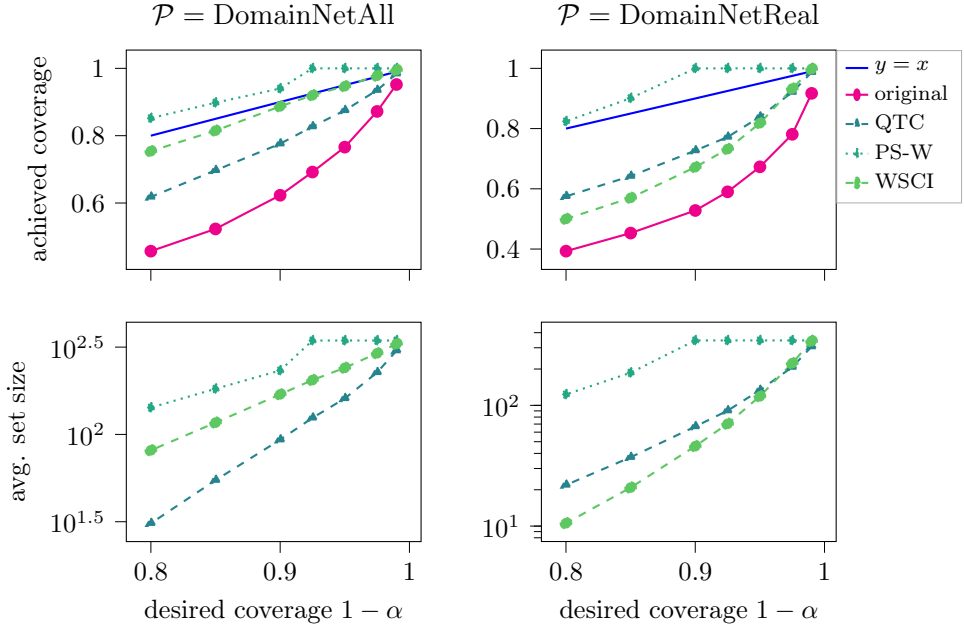

\subsection{Comparison to covariate shift based methods}
\label{sec:covariate_shift_comparison}
QTC does not require labeled data from the target distribution at training or inference time. 
Existing methods that aim to measure the amount of covariate shift based on unlabeled examples also improve the robustness of conformal prediction, but rely on labeled examples from the target domain~\citep{tibshirani2019ConformalPredictionCovariate, park2022PACPredictionSets}.
Here, we compare the performance of QTC to that of covariate shift based methods and show that QTC outperforms the state-of-the-art when labeled data is not available during training, and performs only marginally worse if labeled data is available. 

Under a covariate shift, the conditional distribution of the label $y$ given the feature vector $\vx$ is fixed but the marginal distribution of the feature vectors differ:
\begin{align*}
    \text{source:}
     (\vx, y) \sim \setP = p_\setP (\vx) \times p (y \vert \vx), 
    \quad \quad 
    \text{target:}
     (\vx, y) \sim \setQ = p_\setQ (\vx) \times p (y \vert \vx),
\end{align*}
where $p_{\setP} (\vx)$ and $p_{\setQ} (\vx)$ are the marginal PDFs of the features $\vx$, and $p (y \vert \vx)$ is the conditional PDF of the label $y$. 

In order to account for a covariate shift, \citet{tibshirani2019ConformalPredictionCovariate, park2022PACPredictionSets} utilize an approach called \emph{weighted conformal calibration}.
Weighted conformal calibration uses the likelihood ratio of the covariate distributions, i.e., the importance weights $w (\vx) = p_\setQ (\vx) / p_\setP (\vx)$ to weigh the scores used for the set generating function of the conformal predictor for each sample $(\vx, y) \in \setD_\calib^\setP$. 
A conformal predictor calibrated on a source calibration set with the true importance weights for a target distribution is guaranteed to achieve the desired coverage on the target, see~\citet[Cor. 1]{tibshirani2019ConformalPredictionCovariate}.
In practice, the importance weights are not known and are therefore estimated heuristically. 

Covariate shifts is not well defined for complex tasks such as image classification. 
We therefore follow the experimental setup of~\citet{park2022PACPredictionSets} and consider a backbone ResNet-101 classifier trained using unsupervised domain adaptation based on training sets sampled from both the source and target distribution
as well as an auxillary classifier (discriminator) $g$ that yields probability estimates of membership between the two for a given sample.
For the \emph{weighted split conformal inference} (WSCI) method of~\citet{tibshirani2019ConformalPredictionCovariate}, we estimate the importance weights using this discriminator $g$
and for the PAC prediction sets method of~\citet{park2022PACPredictionSets} based on rejection sampling (PS-W), using histogram density estimation over the probability estimates.
We use TPS as the conformal predictor. 

We consider the DomainNet distribution shift problem~\citep{peng2019MomentMatchingMultiSource} and choose \emph{DomainNet-Infograph} as the target distribution since the coverage gap is insignificant for the others (see~\citet[Table 1]{park2022PACPredictionSets}).
We consider two scenarios, for both of which all six DomainNet domains, i.e. \emph{DomainNet-Sketch, DomainNet-Clipart, DomainNet-Painting, DomainNet-Quickdraw, DomainNet-Real, and DomainNet-Infograph}, are available during training. In the first scenario all domains are also available at inference, whereas in the second scenario, analogous to the ImageNet setup, we only have access to the examples from \emph{DomainNet-Real} (source) and \emph{DomainNet-Infograph} (target).

The results in Figure~\ref{fig:domainnet} show that when the source includes all the domains, WSCI outperforms other methods. However, when only DomainNet-Real is available for the source at calibration time, QTC slightly outperforms WSCI. 
In both settings, PS-W fails if $\alpha$ is chosen such that $1 - \alpha > 0.9$, by constructing uninformatively large confidence sets that tend to contain all possible labels.
On the other hand, QTC and WSCI tend to construct similarly sized confidence sets consistently across the range of $1-\alpha$.
Note that while QTC considerably closes the coverage gap in both setups, QTC-S fails to improve the coverage gap.
This might be due to the fact that ResNet-101 trained with domain adaptation tends to yield very high confidence across all examples.
While a separate discriminator that uses the representations of the ResNet-101 before the fully-connected linear layer is utilized for the covariate shift based methods, this is not the case for QTC and its variants.
Therefore, the threshold found by QTC-S tends to be very close or even equal to $1.0$, hindering the performance.

\section{Theoretical results}

We consider a simple binary classification distribution shift model from~\citet{nagarajan2021UnderstandingFailureModes,garg2021LeveragingUnlabeledData},
and adapt the analysis from~\citet{garg2021LeveragingUnlabeledData} to
show that recalibrating provably succeeds within this model. Specifically, we show that the conformal predictor TPS with QTC-T yields the desired coverage of $1-\alpha$ on the target distribution based on unlabeled examples. 

The distribution shift model from~\citet{nagarajan2021UnderstandingFailureModes} is as follows. 
Consider a binary classification problem with response $y \in \{-1, 1 \}$ and with two features $\vx = [\xinv, \xspr] \in \reals^2$, an invariant one and a spuriously correlated one. 
The source and target distributions $\setP$ and $\setQ$ over the feature vector and label are defined as follows. The label $y$ is uniformly distributed over $\{-1 , 1 \}$. 
The invariant fully-predictive feature 
 $\xinv$ is uniformly distributed in an interval determined by the constants $c>\gamma \geq 0$, with the interval being conditional on $y$:
\begin{align}
    \xinv | y \sim
    \begin{cases}
        U\left[ \gamma ,c\right] \; &\text{if} \quad y=1
        \\
        U\left[ -c,-\gamma \right] \; &\text{if} \quad y=-1
    \end{cases}.
\end{align}
The spurious feature $\xspr$ is correlated with the response $y$ such that $\PR[(\vx, y) \sim \setP]{\xspr \cdot y > 0} = p^\setP$, where $p^\setP \in (0.5, 1.0)$ for some joint distribution $\setP$.
A distribution shift is modeled by simulating target data with different degrees of spurious correlation such that $\PR[(\vx, y) \sim \setQ]{\xspr \cdot y > 0} = p^\setQ$, where $p^\setQ \in [0, 1]$. There is a distribution shift from source to target when $p^\setP \neq p^\setQ$.
Two example distributions $\setP$ and $\setQ$ are illustrated in Figure~\ref{fig:qtc_toy_model}. 

We consider a logistic regression classifier that predicts class probability estimates for the classes $y=-1$ and $y=1$ as
$
\vpi(\vx) =\left[ \frac{1}{1+e^{\vw^T \vx}},\frac{e^{\vw^T \vx}}{1+e^{\vw^T \vx}}\right],
$ 
where $\vw=\left[ \winv, \wsp\right] \in \mathbb{R} ^{2}$. 
The classifier with $\winv > 0$ and $\wsp=0$ minimizes the misclassification error across all choices of distributions $\setP$ and $\setQ$ (i.e., across all choices of $p$). 
However, a classifier learned by minimizing the empirical logistic loss via gradient descent depends on both the invariant feature $\xinv$ and the spuriously-correlated feature $\xspr$, i.e., $\wsp \neq 0$ due to the geometric skews on the finite data and statistical skews of the optimization with finite gradient descent steps~\citep{nagarajan2021UnderstandingFailureModes}. 

\begin{figure}[t]
    \begin{center}
    \begin{tikzpicture}


    \def \xinvleft {0.8145945}
    \def \xinvright {1.1854054}

    
    \begin{groupplot}[
    width=6cm,
    height=4cm,
    group style={
        group name=toy_model,
        group size=2 by 1,
        xlabels at=edge bottom,
        ylabels at=edge left,
        yticklabels at=edge left,
        xticklabels at=edge bottom,
        horizontal sep=1.3cm, 
        vertical sep=1cm,
        },
    legend cell align={left},
    legend columns=1,
    legend style={
                at={(1.5,0.5)}, 
                anchor=south east,
                draw=black!30,
                fill=none,
                font=\scriptsize,
                /tikz/every even column/.append style={column sep=0.5cm}
            },
    legend image post style={xscale=0.5},
    /tikz/every even column/.append style={column sep=0.1cm},
    /tikz/every row/.append style={row sep=0.3cm},
    colormap name=viridis,
    cycle list={[samples of colormap={8} of viridis]},
    tick align=outside,
    tick pos=left,
    x grid style={white!69.01960784313725!black},
    xtick style={color=black},
    y grid style={white!69.01960784313725!black},
    ytick style={color=black},
    scatter,
    scatter/classes={0={mark=triangle*, blue}, 1={mark=*, red}}, 
    visualization depends on={value \thisrow{y} \as \ylbl},
    point meta=explicit,
    title style={at={(0.5,1)}},
    xlabel={$\xinv$}, 
    ylabel={$\xspr$},
    xlabel style={font=\small},
    ylabel style={font=\small},
    x tick label style={font=\small},
    y tick label style={font=\small},
    ytick={-1,+1},
    every axis plot/.append style={thick},
    ]
    

    \nextgroupplot[
        title={Source $\setP$},
        after end axis/.code={
            \draw [black!40, thick, dotted] (axis cs:-\pgfkeysvalueof{/pgfplots/ymax},\pgfkeysvalueof{/pgfplots/ymax}) -- (axis cs:-\pgfkeysvalueof{/pgfplots/ymin},\pgfkeysvalueof{/pgfplots/ymin});
        },
    ]

    \addplot [only marks, mark size=1pt] table[x expr=\thisrow{xinv},y expr=\thisrow{xspr}, meta=y, col sep=comma]{./tables/toy_model_source.csv};

    \nextgroupplot[
        title={Target $\setQ$},
        after end axis/.code={
            \draw [black!40, thick, dotted] (axis cs:-\pgfkeysvalueof{/pgfplots/ymax},\pgfkeysvalueof{/pgfplots/ymax}) -- (axis cs:-\pgfkeysvalueof{/pgfplots/ymin},\pgfkeysvalueof{/pgfplots/ymin});
        },
    ]

    \addplot [only marks, mark size=1pt] table[x expr=\thisrow{xinv},y expr=\thisrow{xspr}, meta=y, col sep=comma]{./tables/toy_model_target.csv};

    \addlegendimage{only marks, mark=diamond*, blue}
    \addlegendentry{$y=-1$}
    
    \addlegendimage{only marks, mark=diamond*, red}
    \addlegendentry{$y=+1$}


    \end{groupplot}
    \end{tikzpicture}
    \end{center}
    \vspace{-0.4cm}
    \caption[caption]{\label{fig:qtc_toy_model}
    Example source and target distributions $\setP$ and $\setQ$ for the binary classification model, and a classifier with $\winv, \wsp = 1$. The decision boundary is shown with a faded dotted line.
    The correlation between the feature $\xspr$ and the label $y$ is higher for the source than target ($p^\setP > p^\setQ$).
    }
\end{figure}
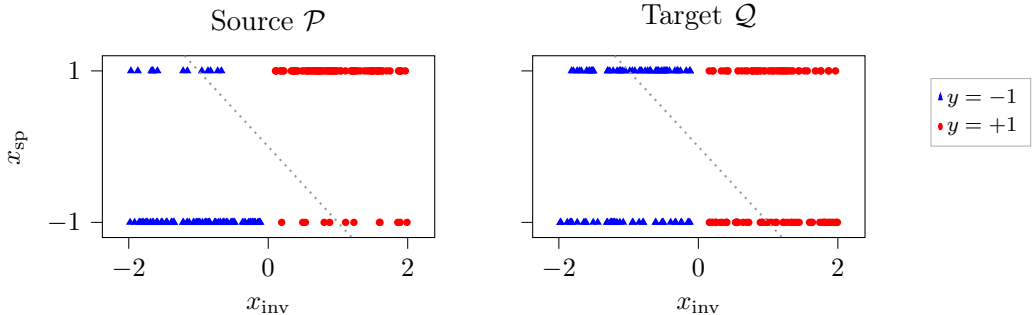

We consider the conformal predictor TPS~\citep{sadinle2019LeastAmbiguousSetValued} applied to this problem to generate confidence sets.
For the logistic regression classifier TPS recalibrated with QTC-T provably suceeds:


\begin{theorem}[Informal]
    \label{thm:qtc-toy-model-informal}
    Consider the logistic regression classifier for the binary classification problem described above with $\winv > 0, \wsp \neq 0$, 
    let $n$ be the number of samples for the source and target datasets and $\alpha \in (0,\epsilon)$ be a user-defined value, where $\epsilon$ is  the error rate of the classifier on the source.
    The coverage achieved on the target by recalibrating TPS on the source with the QTC estimate obtained in~\eqref{eq:qtc_target_prediction} by finding the QTC threshold on the target as in~\eqref{eq:qtc_threshold} converges to $1-\alpha$ as $n \rightarrow \infty$ with high probability.
\end{theorem}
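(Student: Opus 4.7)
The strategy is to exploit the structural fact that, under this toy model, both the max-softmax score $s(\pi(\vx))$ and the TPS nonconformity score $1-\pi_y(\vx)$ have distributions that are two-component mixtures whose conditional components are identical across $\setP$ and $\setQ$; only the mixing weight differs (namely $1-p^\setP$ vs.\ $1-p^\setQ$). Given this, the proof has two parts: (i) a population-level calculation showing that the QTC threshold exactly recovers the oracle target threshold $\tau^\setQ_\alpha$, and (ii) a finite-sample argument based on the Dvoretzky--Kiefer--Wolfowitz (DKW) inequality showing that the empirical quantities concentrate around their population counterparts at rate $O(1/\sqrt{n})$. Combining the two gives the stated convergence of achieved target coverage to $1-\alpha$.

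\paragraph{Population-level identity.}
Assume WLOG $\wsp>0$ and condition on $y$ (everything is symmetric in $y$ by the model). Writing $u=|\xinv|\sim U[\gamma,c]$, the score is $\sigma(\winv u + \wsp)$ on the ``aligned'' event $\{\xspr y>0\}$ (occurring with probability $p^\setP$ on $\setP$, $p^\setQ$ on $\setQ$) and $\sigma(|\winv u - \wsp|)$ on the ``not aligned'' event. Let $F_A$ and $F_{\bar A}$ denote the corresponding score CDFs; they do not depend on the distribution. For $\alpha<\epsilon$, the QTC quantile $q$ lies below the minimum aligned score $\sigma(\winv\gamma+\wsp)$, so $F_A(q)=0$ and
\begin{equation*}
    (1-p^\setQ)\,F_{\bar A}(q) = \alpha, \qquad
    \beta_\infty := (1-p^\setP)\,F_{\bar A}(q) = \alpha\cdot\frac{1-p^\setP}{1-p^\setQ}.
\end{equation*}
The TPS calibration on source with level $\beta_\infty$ sets $\tau$ so that $\PR[\setP]{1-\pi_y>\tau}=\beta_\infty$; the analogous decomposition for $1-\pi_y=\sigma(\wsp-\winv u)$ (with conditional tail $G_{\bar A}$) gives $(1-p^\setP)\,G_{\bar A}(\tau)=\beta_\infty$, i.e.\ $G_{\bar A}(\tau)=\alpha/(1-p^\setQ)$. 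The target noncoverage at this $\tau$ is then $(1-p^\setQ)\,G_{\bar A}(\tau)=\alpha$, so $\tau^\setP_{\beta_\infty}=\tau^\setQ_\alpha$ and the population coverage is exactly $1-\alpha$.

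\paragraph{Finite-sample concentration.}
To pass from the population identity to the empirical procedure, I would apply DKW to the empirical score CDFs on $\setD^\setQ_\unlbl$ and on $\setD^\setP_\calib$: with probability $1-\delta$, each is within $\sqrt{\log(2/\delta)/(2n)}$ of its population CDF uniformly. Verifying that $F_{\bar A}$ and $G_{\bar A}$ have strictly positive density on a neighborhood of the relevant quantile (a direct computation from the uniform law of $u$ and the smoothness of $\sigma$) converts a $1/\sqrt{n}$ CDF error into a $1/\sqrt{n}$ error for the empirical QTC threshold $\hat q_n$, hence a $1/\sqrt{n}$ error for $\hat\beta_n$ and for the calibrated $\hat\tau^\setP_{\hat\beta_n}$. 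A final application of the same bound on the target empirical coverage (or equivalently a Lipschitz bound on coverage as a function of $\tau$) shows that the achieved target coverage equals $1-\alpha\pm O(\sqrt{\log(1/\delta)/n})$, yielding the claim as $n\to\infty$.

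\paragraph{Main obstacle.}
The hardest part is not any single concentration step but verifying that the condition $\alpha<\epsilon$ really guarantees the ``clean mixture'' regime where aligned samples contribute zero mass both to $F^\setQ(q)$ and to the source calibration event $\{1-\pi_y>\tau^\setP_{\beta_\infty}\}$. This requires tracking $\epsilon$, the bound $\sigma(\winv\gamma+\wsp)$, and the corresponding thresholds on $1-\pi_y$ simultaneously, and checking that a small additive DKW perturbation of $\hat q_n$ around $q$ does not push it into the aligned support; one establishes a constant gap between $q$ and $\sigma(\winv\gamma+\wsp)$ depending on $(\epsilon-\alpha)$ and argues that for $n$ large this gap dominates the $O(1/\sqrt{n})$ fluctuations. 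The remaining case analysis ($\wsp<0$, the symmetric case $y=-1$, and replacing equality by $\ge$ in the ``$\inf$'' definitions) is handled by the same argument because the model is symmetric under $y\mapsto -y$ and $\wsp\mapsto -\wsp$.
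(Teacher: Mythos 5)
Your proposal is correct and rests on the same structural insight as the paper: in this model, the max-softmax score (used by QTC) and the TPS nonconformity score each have distributions that are mixtures over the alignment event $\{\xspr y > 0\}$ with a distribution-\emph{independent} conditional component and a distribution-\emph{dependent} mixing weight, and for $\alpha<\epsilon$ the relevant thresholds sit in the region where the aligned component contributes zero mass. Where you depart from the paper is in the finite-sample step and in the bookkeeping. The paper splits the argument into two pieces: Step~1 invokes the standard conformal coverage lemma (\citep[Thm.~2.2]{lei2017DistributionFreePredictiveInference}, \citep[Thm.~1]{angelopoulos2020UncertaintySetsImage}) to identify $\tau^\setP_\beta = \tau^\setQ_\alpha$, and Step~2 proves $\beta_\QTC \to \beta$ by conditioning on the disaligned subset $\setXD$, applying DKWM on that subset, and using a separate Hoeffding bound to control $|\setXD|$. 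You instead derive the closed form $\beta_\infty = \alpha(1-p^\setP)/(1-p^\setQ)$ directly (which the paper never writes out and which nicely clarifies when $\beta_\infty<1$), then apply DKW to the full empirical CDFs and use a density lower bound to push a $1/\sqrt n$ CDF error into a $1/\sqrt n$ quantile error and then into a coverage error. Your route is arguably more transparent at the population level but requires verifying density positivity at several points; the paper's route avoids any density regularity because DKWM holds uniformly in $q$, so one can evaluate it at the empirical $\hat q_n$ without controlling $\hat q_n$ itself, at the cost of the extra Hoeffding bound on the subset cardinality and a less intuitive constant $c_{\mathrm{sp}}$. Your ``main obstacle'' paragraph correctly identifies a subtlety that the paper's finite-sample argument treats somewhat casually: equation~\eqref{eq:qtc_subset_alpha} implicitly assumes the empirical $\hat q_n$ stays below the minimum aligned score, which needs the constant gap argument you sketch; flagging it is a genuine strengthening rather than a gap in your proposal.
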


Regarding the assumption on $\alpha$: A value of $\alpha$ that is larger than the error rate of the classifier does make sense as it would result in empty confidence sets for a portion of the examples in the dataset. 

In order to understand the intuition behind Theorem~\ref{thm:qtc-toy-model-informal}, we first explain how the coverage is off under a distribution shift in this model. Consider a classifier that depends positively on the spurious feature (i.e., $\wsp > 0$).
When the spurious correlation is decreased from the source distribution to the target, the error rate of the classifier increases.
TPS calibrated on the source samples finds a threshold $\tau$ such that the prediction sets yield $1-\alpha$ coverage on the source dataset as $n \rightarrow \infty$. 
In other words, the fraction of misclassified points for which the model confidence is larger than the threshold $\tau$ is equal to $\alpha$ on the source. 
As the spurious correlation decreases and the error rate increases from source to target, the fraction of misclassified points for which the model confidence is larger than the threshold $\tau$ surpasses $\alpha$, leading to a gap in targeted and actual coverage.


Now, we remark on how QTC recalibrates and ensures the target coverage is met. Note that there exists an unknown coverage level $1-\beta$ 
that can be used to calibrate TPS on the source distribution such that it yields $1-\alpha$ coverage on the target. 
Theorem~\ref{thm:qtc-toy-model-informal} guarantees that QTC correctly estimates 
$\beta$ and therefore recalibration of the conformal predictor using QTC yields the desired coverage level of $1-\alpha$ on the target.

\section{Conclusion}
\label{sec:conclusion}
We considered the problem of providing reliable uncertainty estimates for conformal prediction algorithms under distribution shifts based on unlabeled examples. 
We propose a simple test-time recalibration method dubbed Quantile Thresholded Confidence (QTC) that recalibrates conformal predictors based only on unlabeled examples. QTC provably succeeds on the distribution shift model  from~\citet{nagarajan2021UnderstandingFailureModes,garg2021LeveragingUnlabeledData}, and most importantly reduces, or even closes, the coverage gap (i.e., the difference of achieved coverage and desired coverage) of  conformal predictors under distribution shifts for a variety of natural distribution shifts. 

\section*{Code}
Code to reproduce the experiments is available at 
\url{https://github.com/MLI-lab/recalibrating_conformal_prediction}.

\section*{Acknowledgements}
F. F. Yilmaz and R. Heckel are (partially) supported by NSF under award IIS-1816986. R. Heckel is also supported by the Institute of Advanced Studies at the Technical University of Munich, and also received funding by the German Federal Ministry of Education and Research and the Bavarian State Ministry for Science and the Arts.

\printbibliography

@string{iclr = {International Conference on Learning Representations (ICLR)}}

@string{icml = {International {Conference} on {Machine} {Learning} (ICML)}}

@string{nips = {Advances in Neural Information Processing Systems (NeurIPS)}}

@string{cvpr = {Conference on Computer Vision and Pattern Recognition (CVPR)}}

@string{iccv = {IEEE International Conference on Computer Vision (ICCV)}}

@string{jasa = {Journal of the American Statistical Association (JASA)}}

@article{bates2021DistributionFreeRiskControllingPrediction,
  title = {Distribution-free, risk-controlling prediction sets},
  author = {Bates, Stephen and Angelopoulos, Anastasios and Lei, Lihua and Malik, Jitendra and Jordan, Michael I.},
  year = {2021},
  journal = {Journal of the ACM}
}

@inproceedings{peng2019MomentMatchingMultiSource,
  title = {Moment matching for multi-source domain adaptation},
  booktitle = iccv,
  author = {Peng, Xingchao and Bai, Qinxun and Xia, Xide and Huang, Zijun and Saenko, Kate and Wang, Bo},
  year = {2019}
}

@inproceedings{park2022PACPredictionSets,
  title = {PAC prediction sets under covariate shift},
  booktitle = iclr,
  author = {Park, Sangdon and Dobriban, Edgar and Lee, Insup and Bastani, Osbert},
  year = {2022}
}

@article{angelopoulos2020UncertaintySetsImage,
  title = {Uncertainty sets for image classifiers using conformal prediction},
  author = {Angelopoulos, Anastasios Nikolas and Bates, Stephen and Jordan, Michael and Malik, Jitendra},
  year = {2020},
  journal = iclr
}

@article{cauchois2020RobustValidationConfident,
  title = {Robust validation: Confident predictions even when distributions shift},
  author = {Cauchois, Maxime and Gupta, Suyash and Ali, Alnur and Duchi, John C.},
  year = {2020},
  journal = {arXiv:2008.04267 [cs, stat]}
}

@article{chen2021MANDOLINEModelEvaluation,
  title = {MANDOLINE: Model evaluation under distribution shift},
  author = {Chen, Mayee and Goel, Karan and Sohoni, Nimit},
  year = {2021},
  journal = icml,
  langid = {english}
}

@article{deng2009ImageNetLargescaleHierarchical,
  title = {ImageNet: A large-scale hierarchical image database},
  journal = cvpr,
  author = {Deng, Jia and Dong, Wei and Socher, Richard and Li, Li-Jia and Li, Kai and {Fei-Fei}, Li},
  year = {2009}
}

@article{deng2021AreLabelsAlways,
  title = {Are labels always necessary for classifier accuracy evaluation?},
  author = {Deng, Weijian and Zheng, Liang},
  year = {2021},
  journal = cvpr
}

@article{deng2021WhatDoesRotation,
  title = {What does rotation prediction tell us about classifier accuracy under varying testing environments?},
  author = {Deng, Weijian and Gould, Stephen and Zheng, Liang},
  year = {2021},
  journal = icml
}

@article{garg2021LeveragingUnlabeledData,
  title = {Leveraging unlabeled data to predict out-of-distribution performance},
  author = {Garg, Saurabh and Balakrishnan, Sivaraman and Lipton, Zachary C},
  year = {2022},
  journal = iclr,
  langid = {english}
}

@article{gendler2022AdversariallyRobustConformal,
  title = {Adversarially robust conformal prediction},
  journal = iclr,
  author = {Gendler, Asaf and Weng, Tsui-Wei and Daniel, Luca and Romano, Yaniv},
  year = {2022},
  langid = {english}
}

@article{guillory2021PredictingConfidenceUnseen,
  title = {Predicting with confidence on unseen distributions},
  author = {Guillory, Devin and Shankar, Vaishaal and Ebrahimi, Sayna and Darrell, Trevor and Schmidt, Ludwig},
  year = {2021},
  journal = iccv
}

@inproceedings{guo2017CalibrationModernNeural,
  title = {On {{Calibration}} of {{Modern Neural Networks}}},
  booktitle = {International {{Conference}} on {{Machine Learning}}},
  author = {Guo, Chuan and Pleiss, Geoff and Sun, Yu and Weinberger, Kilian Q.},
  year = {2017},
  publisher = {{PMLR}},
  langid = {english}
}

@article{hendrycks2019BenchmarkingNeuralNetwork,
  title = {Benchmarking neural network robustness to common corruptions and perturbations},
  author = {Hendrycks, Dan and Dietterich, Thomas},
  year = {2019},
  journal = iclr
}

@article{hendrycks2021ManyFacesRobustness,
  title = {The many faces of robustness: A critical analysis of out-of-distribution generalization},
  author = {Hendrycks, Dan and Basart, Steven and Mu, Norman and Kadavath, Saurav and Wang, Frank and Dorundo, Evan and Desai, Rahul and Zhu, Tyler and Parajuli, Samyak and Guo, Mike and Song, Dawn and Steinhardt, Jacob and Gilmer, Justin},
  year = {2021},
  journal = iccv
}

@article{jiang2021AssessingGeneralizationSGD,
  title = {Assessing generalization of SGD via disagreement},
  author = {Jiang, Yiding and Nagarajan, Vaishnavh and Baek, Christina and Kolter, J. Zico},
  year = {2022},
  journal = iclr
}

@article{lei2017DistributionFreePredictiveInference,
  title = {Distribution-{{Free Predictive Inference For Regression}}},
  author = {Lei, Jing and G'Sell, Max and Rinaldo, Alessandro and Tibshirani, Ryan J. and Wasserman, Larry},
  year = {2018},
  journal = jasa
}

@article{nagarajan2021UnderstandingFailureModes,
  title = {Understanding the failure modes of out-of-distribution generalization},
  author = {Nagarajan, Vaishnavh and Andreassen, Anders and Neyshabur, Behnam},
  year = {2021},
  journal = iclr
}

@article{podkopaev2021DistributionfreeUncertaintyQuantification,
  title = {Distribution-Free Uncertainty Quantification for Classification under Label Shift},
  author = {Podkopaev, Aleksandr and Ramdas, Aaditya},
  year = {2021},
  journal = {arXiv:2103.03323 [cs, stat]}
}

@article{recht2019ImageNetClassifiersGeneralize,
  title = {Do ImageNet classifiers generalize to ImageNet?},
  author = {Recht, Benjamin and Roelofs, Rebecca and Schmidt, Ludwig and Shankar, Vaishaal},
  year = {2019},
  journal = icml
}

@article{romano2020ClassificationValidAdaptive,
  title = {Classification with valid and adaptive coverage},
  author = {Romano, Yaniv and Sesia, Matteo and Cand{\`e}s, Emmanuel J.},
  year = {2020},
  journal = nips
}

@article{sadinle2019LeastAmbiguousSetValued,
  title = {Least ambiguous set-valued classifiers with bounded error levels},
  author = {Sadinle, Mauricio and Lei, Jing and Wasserman, Larry},
  year = {2019},
  journal = jasa
}

@article{santurkar2020BREEDSBenchmarksSubpopulation,
  title = {BREEDS: Benchmarks for subpopulation shift},
  author = {Santurkar, Shibani and Tsipras, Dimitris and Madry, Aleksander},
  year = {2021},
  journal = iclr
}

@article{tibshirani2019ConformalPredictionCovariate,
  title = {Conformal prediction under covariate shift},
  author = {Tibshirani, Ryan J. and Barber, Rina Foygel and Candes, Emmanuel J. and Ramdas, Aaditya},
  year = {2019},
  journal = nips
}

@book{vovk2005AlgorithmicLearningRandom,
  title = {Algorithmic learning in a random world},
  author = {Vovk, Vladimir and Gammerman, A. and Shafer, Glenn},
  year = {2005},
  publisher = {{Springer}}
}

@article{wang2019LearningRobustGlobal,
  title = {Learning robust global representations by penalizing local predictive power},
  author = {Wang, Haohan and Ge, Songwei and Xing, Eric P and Lipton, Zachary C},
  year = {2019},
  journal = nips
}

\clearpage
\appendix
\section{Proof of Theorem~\ref{thm:qtc-toy-model-informal}}
In this section, 
we state and prove a formal version of Theorem~\ref{thm:qtc-toy-model-informal}. 
Our results rely on adapting the proof idea of~\citet[Theorem 3]{garg2021LeveragingUnlabeledData} for predicting the classification accuracy of a model to our conformal prediction setup.

Recall that we consider a distribution shift model for a binary classification problem with an invariant predictive feature and a spuriously correlated feature, where a distribution shift is induced by the spurious feature of the target distribution being more or less correlated with the label than the source distribution~\citep{nagarajan2021UnderstandingFailureModes,garg2021LeveragingUnlabeledData}. 

We consider a logistic regression classifier that outputs class probability estimates (softmax scores) for the two classes of $y=-1$ and $y=+1$ as
\begin{align*}
\vpi (\vx) =\left[ \frac{1}{1+e^{\vw^T \vx}},\frac{e^{\vw^T \vx}}{1+e^{\vw^T \vx}}\right],
\end{align*}
where $\vw=\left[ \winv, \wsp\right] \in \mathbb{R} ^{2}$. 
The classifier with $\winv > 0$ and $\wsp=0$ minimizes the misclassification error across all choices of distributions $\setP$ and $\setQ$ (i.e., across all choices of $p$).  
However, a classifier learned by minimizing the empirical logistic loss via gradient descent depends on both the invariant feature $\xinv$ and the spuriously-correlated feature $\xspr$, i.e., $\wsp \neq 0$ due to the geometric skews on the finite data and statistical skews of the optimization with finite gradient descent steps~\citep{nagarajan2021UnderstandingFailureModes}.

In order to understand how geometric skews result in learning a classifier that depends on the spurious feature, suppose the probability that the spurious feature agrees with the label is high, i.e., $p$ is close to $1.0$. Note that in a finite-size training set drawn from this distribution, the fraction of samples for which the spurious feature disagrees with the label (i.e., $\xspr \neq y$) is small. Therefore, the margin on the invariant feature for these samples alone can be significantly larger than the actual margin $\gamma$ of the underlying distribution. This implies that the max-margin classifier depends positively on the spurious feature, i.e., $\wsp > 0$.
Furthermore, we assume that $\winv > 0$, which is required to obtain non-trivial performance (beating a random guess).

\paragraph{Conformal prediction in the distribution shift model.}
We consider the conformal prediction method TPS~\citep{sadinle2019LeastAmbiguousSetValued} applied to the linear classifier described above. While other conformal prediction methods such as APS and RAPS also work for this model, the smoothing induced by the randomization of the model scores used in those conformal predictors would introduce additional complexity to the analysis. 
TPS also tends to be more efficient in that it yields smaller confidence sets compared to APS and RAPS at the same coverage level, see~\citep[Table 9]{angelopoulos2020UncertaintySetsImage}.

In the remaining part of this section, we establish Theorem~\ref{thm:qtc-toy-model-informal}, which states that TPS recalibrated on the source calibration set with QTC achieves the desired coverage of $1-\alpha$ on any target distribution that has a (potentially) different correlation probability $p$ for the spurious feature.
We show this in two steps: 

First, consider the oracle conformal predictor that is calibrated to achieve $\alpha$ misscoverage on the target distribution, i.e., the conformal predictor with threshold $\tau_\alpha^\setQ$ chosen so that
\begin{align}
\label{eq:correctcallibration}
\alpha = \PR[(\vx,y) \sim \setQ]{ y \notin \setC(\vx,\tau_\alpha^\setQ) }.
\end{align}
Define the misscoverage on the source distribution as
\begin{align*}
\beta = \PR[(\vx,y) \sim \setP]{ y \notin \setC(\vx,\tau_\alpha^\setQ) }.
\end{align*}
From those two equations, it follows that a conformal predictor callibrated to achieve misscoverage $\beta$ on the source distribution $\setP$ achieves the desired misscoverage of $\alpha$ on the target distribution, provided that the calibration sets are sufficiently large, which is assumed as we consider the case of $n \to \infty$. 

Second, we provide a bound on the deviation of the QTC estimate from the true value of $\beta$. We show that in the infinite sample size case, the QTC estimate converges to the true value of $\beta$.
Those two steps prove Theorem~\ref{thm:qtc-toy-model-informal}.

\paragraph{Step 1:}
QTC relies on the fact that there exists an unknown $\beta \in (0, 1)$ that can be used to calibrate TPS on the source distribution such that it yields $1-\alpha$ coverage on the target. 

Here, we show that callibrating to achieve $1 - \beta$ coverage on the source calibration set $\DsetP$ via computing the threshold~\eqref{eq:tau-calibrated} achieves $1-\alpha$ coverage on the target distribution $\setQ$ as $n \rightarrow \infty$.

We utilize the following coverage guarantee of conformal predictors established by~\citet{vovk2005AlgorithmicLearningRandom,lei2017DistributionFreePredictiveInference,angelopoulos2020UncertaintySetsImage}:

\begin{lemma}{\citep[Thm. 2.2]{lei2017DistributionFreePredictiveInference}, \citep[Thm. 1, Prop. 1]{angelopoulos2020UncertaintySetsImage}}
    \label{lem:coverage_lower_bound}
    Consider $(\vx_i, y_i), i=1,\ldots,n$ drawn iid from some distribution $\setP$. Let $\setC (\vx, \tau)$ be the conformal set generating function that satisfies the nesting property in $\tau$, i.e., $\setC (\vx, \tau^\prime) \subseteq \setC (\vx, \tau)$ if $\tau^\prime \leq \tau$. Then, the conformal predictor calibrated by finding $\tau^\ast$ that achieves $1-\alpha$ coverage on the finite set $\{(\vx_i, y)\}_{i=1}^n$ as in~\eqref{eq:tau-calibrated} achieves $1-\alpha$ coverage on distribution $\setP$, i.e.,
    \begin{align}
        \label{eq:coverage_lower_bound}
        \PR[(\vx,y) \sim \setP]{y \in \setC (\vx, \tau^\ast)} \geq 1 - \alpha.
    \end{align}
    Furthermore, assume that the variables $s_i = s (\vx_i, y_i) = \inf \{ \tau : y_i \in \setC (\vx_i, \tau) \}$ for $i=1,\ldots,n$ are distinct almost surely. Then, the coverage achieved by the calibrated conformal predictor with the set generating function $\setC (\vx, \tau) = \{ \ell \in \setY : s (\vx, \ell) \leq \tau \}$ is also accurate, in that it satisfies
    \begin{align}
        \label{eq:coverage_upper_bound}
        \PR[(\vx,y) \sim \setP]{y \in \setC (\vx, \tau^\ast)} \leq 1 - \alpha + \dfrac{1}{n+1}.
    \end{align}
\end{lemma}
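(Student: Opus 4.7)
The plan is to reduce both bounds to a single exchangeability-of-ranks argument on the conformal scores $s_i = \inf\{\tau : y_i \in \setC(\vx_i, \tau)\}$, with $i = 1, \ldots, n+1$ ranging over the calibration set together with an independent test point $(\vx_{n+1}, y_{n+1}) \sim \setP$. Using the nesting property of $\setC$ in $\tau$, the coverage event can be rewritten as $\{y_{n+1} \in \setC(\vx_{n+1}, \tau^\ast)\} = \{s_{n+1} \leq \tau^\ast\}$, and the calibration equation~\eqref{eq:tau-calibrated} becomes $\tau^\ast = \inf\{\tau : |\{i \leq n : s_i \leq \tau\}| \geq k\}$, so $\tau^\ast$ is the $k$-th order statistic of $\{s_1, \ldots, s_n\}$ with $k = \lceil (1-\alpha)(n+1) \rceil$.

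First I would prove the lower bound~\eqref{eq:coverage_lower_bound}. Because $(\vx_i, y_i)_{i=1}^{n+1}$ are iid, the scores $(s_1, \ldots, s_{n+1})$ are exchangeable. The condition $s_{n+1} \leq \tau^\ast$ is equivalent (up to tie-handling) to saying that $s_{n+1}$ lies among the $k$ smallest entries of the full vector $(s_1, \ldots, s_{n+1})$; adopting the worst-case tie-breaking convention that ranks $s_{n+1}$ last among any tied scores, exchangeability implies that this worst-case rank is stochastically dominated by the uniform law on $\{1, \ldots, n+1\}$. Hence $\PR{s_{n+1} \leq \tau^\ast} \geq k/(n+1) \geq 1-\alpha$, which is~\eqref{eq:coverage_lower_bound}.

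Next I would derive the upper bound~\eqref{eq:coverage_upper_bound} under the added almost-sure distinctness hypothesis on the $s_i$. Distinctness eliminates ties, so the rank of $s_{n+1}$ in $(s_1, \ldots, s_{n+1})$ is genuinely uniform on $\{1, \ldots, n+1\}$ and the equivalence $\{s_{n+1} \leq \tau^\ast\} = \{\mathrm{rank}(s_{n+1}) \leq k\}$ now holds without qualification. Therefore $\PR{s_{n+1} \leq \tau^\ast} = k/(n+1)$, and the ceiling inequality $k \leq (1-\alpha)(n+1) + 1$ yields $1-\alpha + 1/(n+1)$ after division.

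The main obstacle is the tie bookkeeping in the lower bound: without distinctness, ties between $s_{n+1}$ and one or more calibration scores render the rank of $s_{n+1}$ ambiguous, and one must choose the tie-breaking convention conservatively (against coverage) to preserve the $k/(n+1)$ inequality. A smaller technical point is verifying that the nesting property really produces the clean equivalence $y_i \in \setC(\vx_i, \tau) \iff s_i \leq \tau$ rather than a strict inequality at the boundary, which follows from right-continuity of $\tau \mapsto \setC(\vx, \tau)$ at the infimum $s_i$ and is standard in the conformal-prediction literature.
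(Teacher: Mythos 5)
The paper does not give its own proof of Lemma~\ref{lem:coverage_lower_bound}; it is invoked as a known result and cited directly to \citet{lei2017DistributionFreePredictiveInference} and \citet{angelopoulos2020UncertaintySetsImage}, so there is no in-paper argument to compare against. Your proposal follows the standard exchangeability-of-ranks route used in those references, and both the reduction of the coverage event to the rank of $s_{n+1}$ in $(s_1,\ldots,s_{n+1})$ and the upper-bound step under almost-sure distinctness are correct.

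The tie handling in your lower bound has a direction error. You adopt the ``worst-case'' convention that ranks $s_{n+1}$ last among any tied scores and assert that this rank is stochastically dominated by the uniform law on $\{1,\ldots,n+1\}$. That is backwards. Writing $r^{+}(s_{n+1}) = \lvert\{i \le n+1 : s_i \le s_{n+1}\}\rvert$ for the worst-case rank, exchangeability gives
\begin{equation*}
(n+1)\,\PR{r^{+}(s_{n+1}) \le k} = \EX{\sum_{j=1}^{n+1} \ind{r^{+}(s_j) \le k}} \le k ,
\end{equation*}
because a tie at the $k$-th sorted position pushes every tied index past rank $k$; hence $\PR{r^{+}(s_{n+1}) \le k} \le k/(n+1)$, i.e.\ the worst-case rank stochastically \emph{dominates} the uniform, which is the wrong side of the inequality for a lower bound. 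What you need is the \emph{favorable} convention $r^{-}(s_{n+1}) = \lvert\{i \le n : s_i < s_{n+1}\}\rvert + 1$: then $\sum_{j=1}^{n+1}\ind{r^{-}(s_j) \le k} \ge k$ holds pointwise (the $k$ smallest entries of the sorted vector always contribute at least $k$ indices of favorable rank at most $k$), so $\PR{r^{-}(s_{n+1}) \le k} \ge k/(n+1)$, and moreover $\{r^{-}(s_{n+1}) \le k\}$ coincides exactly with $\{s_{n+1} \le \tau^{\ast}\}$ since $\tau^{\ast}$ is the $k$-th order statistic of the calibration scores. Swapping the tie-breaking convention repairs the argument; the remainder of your proposal, including the equivalence $y\in\setC(\vx,\tau)\iff s(\vx,y)\le\tau$ which indeed requires the right-closed set form stated in the lemma rather than nesting alone, is sound.
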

%
Both the lower bound~\eqref{eq:coverage_lower_bound} and the upper bound~\eqref{eq:coverage_upper_bound} of Lemma~\ref{lem:coverage_lower_bound} apply to TPS in the context of the binary classification problem that we consider. 
To see this, we verify that TPS calibrated with the set generating function~\eqref{eq:tps-toy-set-function} satisfies both assumptions of Lemma~\ref{lem:coverage_lower_bound}. 
First, note that TPS satisfies the nesting property, since we have $\setC^\TPS (\vx, \tau^\prime) \subseteq \setC^\TPS (\vx, \tau)$ for $\tau^\prime \leq \tau$.
Next, note that for TPS we have $s (\vx, y) = \pi_y (\vx)$. Further note that the linear logistic regression model we consider assigns a distinct score to each data point and since the invariant feature $\xinv$ is uniformly distributed in a continuous interval conditional on $y$, the variables $s_i$ are distinct almost surely.

Now, consider the oracle TPS threshold $\tau_\alpha^\setQ$ that achieves $1-\alpha$ coverage, or equivalently $\alpha$ miscoverage, on the target distribution, i.e.,
\begin{align}
    \label{eq:oracle_target_coverage}
    \PR[(\vx, y) \sim \setQ]{y \notin \setC^\TPS (\vx, \tau^\setQ_\alpha)} = \alpha.
\end{align}
%
Next, note that $y \notin \setC^\TPS (\vx, \tau^\setQ_\alpha)$ if and only if $\arg \max_{j \in \{0,1\}} \pi_{j} \left( \vx \right) \neq y  \; \emph{and} \;
\max_{j \in \{0,1\}} \pi_{j} \left( \vx \right) \geq \tau^\setQ_\alpha$. To see that, note that the confidence set returned by TPS is a singleton containing only the top prediction of the model when the confidence of this prediction is higher than the threshold $\tau^\setQ_\alpha$. 
Moreover, the confidence set returned by TPS for the binary classification problem above does not contain the true label only when the confidence set is the singleton set of the top prediction of the model and is different than the true label. Thus, equation~\eqref{eq:oracle_target_coverage} implies
\begin{align}
    \label{eq:oracle_binary_target_coverage}
    \PR[(\vx, y) \sim \setQ]{\arg \max_{j \in \{0,1\}} \pi_{j} \left( \vx \right) \neq y  \; \emph{and} \;
    \max _{j \in \{0,1\}} \pi_{j} \left( \vx \right) \geq \tau^\setQ_\alpha} 
    = 
    \alpha.
\end{align}
We define $\beta$ as the miscoverage that the oracle TPS yields on the source distribution, i.e.,
\begin{align}
    \label{eq:oracle_binary_source_coverage}
    \beta := \PR[(\vx, y) \sim \setP]{\arg \max _{j \in \{0,1\}} \pi_j \left( \vx \right) \neq y  \; \emph{and} \;
    \max _{j \in \{0,1\}} \pi_j \left( \vx \right) \geq \tau^\setQ_\alpha}.
\end{align}
We have $\beta \neq \alpha$ if there is a distribution shift from target to source. 


Consider the threshold $\hat \tau^\setP_\beta$ found by calibrating TPS on the set $\DsetP$ to achieve empirical coverage of $1 - \beta$ as in~\eqref{eq:tau-calibrated}.
TPS with the threshold $\hat \tau^\setP_\beta$ achieves coverage on the source distribution $\setP$ as a result of Lemma~\ref{lem:coverage_lower_bound}. Moreover, combining~\eqref{eq:coverage_lower_bound} with~\eqref{eq:coverage_upper_bound} at $n \rightarrow \infty$ yields exact coverage of $1-\beta$ on the source distribution $\setP$. Thus, we have
%
\begin{align}
    \label{eq:qtc_recalibrated_binary_target_coverage}
    \PR[(\vx, y) \sim \setP]{\arg \max _{j \in \{0,1\}} \pi_j \left( \vx \right) \neq y  \; \emph{and} \;
    \max _{j \in \{0,1\}} \pi_j \left( \vx \right) \geq \hat \tau^\setP_\beta} 
    = 
    \beta.
\end{align}
Comparing equation~\eqref{eq:qtc_recalibrated_binary_target_coverage} to the definition of $\beta$ in equation~\eqref{eq:oracle_binary_source_coverage} yields $\hat \tau^\setP_\beta = \tau^\setQ_\alpha$. Therefore, it follows that TPS calibrated to achieve $1 - \beta$ coverage on the source calibration set $\DsetP$ as in~\eqref{eq:tau-calibrated} achieves exactly $1-\alpha$ coverage on the target distribution $\setQ$ as $n \rightarrow \infty$.

\paragraph{Step 2:}
In the second step, we show that QTC correctly estimates the value of $\beta$ defined above. This is formalized by the lemma below.

Recall that the calibration of TPS entails identifying a cutoff threshold $\tau$ computed by the formula~\eqref{eq:tau-calibrated}. The set generating function of TPS for the linear classification problem described above simplifies to 
\begin{align}
    \label{eq:tps-toy-set-function}
    \setC^\TPS (\vx, \tau)
    =
    \left\{
        j \in \{ 0, 1 \} \colon \pi_j (\vx)
        \geq
        1 - \tau
    \right\},
\end{align}
where $\pi_0(\vx)$ and $\pi_1(\vx)$ are the first and second entry of $\vpi(\vx)$ as defined above. 
%

We are only interested in the regime where the desired coverage level $1-\alpha$ is larger than the classifier's accuracy, or equivalently $\alpha < \epsilon$ with $\epsilon$ being the error rate of the classifier.
This is because a trivial method that constructs confidence sets with equal length of $1$ for all samples (i.e., singleton sets of only the predicted label) already achieves coverage of $1 - \epsilon$.

\begin{lemma}
    Given the logistic regression classifier for the binary classification problem described above with any $\winv > 0, \wsp \neq 0$, assume that the threshold $q$ for QTC is computed using a dataset $\DsetQ$ consisting of $n$ samples, sampled from some target distribution $\setQ$, such that
    \begin{align}
        \label{eq:thm_qtc_threshold}
        \dfrac{1}{|\DsetQ|} \sum_{\vx \in \DsetQ}
        \ind{ \max _{j \in \{0,1\}} \pi_j \left( \vx \right)  < q } 
        =
        \alpha.
    \end{align}
    Consider the oracle TPS conformal predictor with conformal threshold $\tau^\setQ_\alpha$, i.e., the predictor that achieves $1-\alpha$ coverage on the target distribution $\setQ$.
    Denote with $1-\beta$ the coverage achieved on the source distribution $\setP$ by this oracle TPS. 
    Fix a $\delta > 0$. 
    The QTC estimate of the \emph{miscoverage} $\beta$, denoted by
    \begin{align}
        \label{eq:thm_beta_qtc}
        \beta_\QTC 
        = 
        \frac{1}{ |\DsetP| }
        \sum_{\vx \in \DsetP }
        \ind{ s(\pi(\vx)) <  q },
    \end{align}
    satisfies the following inequality with probability at least $1-\delta$ over a randomly drawn set of examples $\DsetQ$
    \begin{align}
        \label{eq:thm_result}
        \left|
            \beta_\QTC 
            -
            \beta
        \right| 
        \leq
        \sqrt {\dfrac{2 \log(16/\delta)}{n \cdot c_{sp} } },
    \end{align}
    where $\csp = (1 - p^\setQ) \cdot (1 - p^\setP)^2$ if $\wsp > 0$ and $\csp = p^\setQ \cdot (p^\setP)^2$ otherwise.
\end{lemma}

\begin{proof}
    We adapt the proof idea of~\citet[Theorem 3]{garg2021LeveragingUnlabeledData}, which pertains to the problem of estimating the classification error of the classifier on the target, to estimating the source coverage of the oracle conformal predictor that achieves $1-\alpha$ coverage on the target distribution.

    For notational convenience, we define the event that a sample $(\vx, y)$ is not in the prediction set of the oracle TPS with conformal threshold $\tau^\setQ_\alpha$ (i.e., $y \notin \setC^\TPS (\vx, \tau^\setQ_\alpha)$) as
    \begin{align*}
        \Emc
        &=  \{ y \notin \setC^\TPS (\vx, \tau^\setQ_\alpha) \} \\
        &=
        \{
            \arg \max _{j \in \{0,1\}} \pi_j \left( \vx \right) \neq y  \; \emph{and} \;
            \max _{j \in \{0,1\}} \pi_j \left( \vx \right) \geq \tau^\setQ_\alpha
        \}.
    \end{align*}
    %

    \paragraph{The infinite sample size case ($\mathbf{n \rightarrow \infty}$).}
    In this part we show that as $n \rightarrow \infty$, the QTC estimate $\beta_\QTC$ found as in~\eqref{eq:thm_beta_qtc} converges to the source miscoverage $\beta$, to illustrate the proof idea. 
    For $n\to \infty$, the QTC estimate $\beta_\QTC$ in~\eqref{eq:thm_beta_qtc} becomes
    \begin{align}
        \beta_\QTC 
        &= 
        \EX[(\vx, y) \sim \setP]{ \ind{\max _{j \in \{0,1\}} f_j(\vx) \leq q} } \nonumber \\
        &=
        \PR[(\vx, y) \sim \setP]{ \max _{j \in \{0,1\}} \pi_j \left( \vx \right) < q } \nonumber \\
        &=
        \label{eq:qtc_infinite_source_threshold}
        \PR[(\vx, y) \sim \setP]{\Emc} \\
        &=
        \beta \nonumber,
    \end{align}
    where the last equality is the definition of $\beta$ as given in equation~\eqref{eq:oracle_binary_source_coverage}.
    The critical step is equation~\eqref{eq:qtc_infinite_source_threshold}, which we establish in the remainder of this part of the proof.

    First, we condition on the label $y$. Using the law of total probability, we get
    \begin{align}
        \label{eq:qtc_threshold_infinite_conditioned_on_y}
        &\PR[(\vx, y) \sim \setP]{ \max _{j \in \{0,1\}} \pi_j \left( \vx \right) < q }  \nonumber \\
        &\qquad = \PR[\vx \sim \setP \vert y=-1]{ \max _{j \in \{0,1\}} \pi_j \left( \vx \right) < q }
        \cdot
        \PR[(\vx, y) \sim \setP]{y=-1} \nonumber \\
        &\qquad \qquad +
        \PR[\vx \sim \setP \vert y=+1]{ \max _{j \in \{0,1\}} \pi_j \left( \vx \right) < q }
        \cdot
        \PR[(\vx, y) \sim \setP]{y=+1} \nonumber \\
        &\qquad \stackrel{(i)}{=} 
        \frac{1}{2} \cdot \PR[\vx \sim \setP \vert y=-1]{ \max _{j \in \{0,1\}} \pi_j \left( \vx \right) < q }
        +
        \frac{1}{2} \cdot \PR[\vx \sim \setP \vert y=+1]{ \max _{j \in \{0,1\}} \pi_j \left( \vx \right) < q } \nonumber \\
        &\qquad \stackrel{(ii)}{=}
        \PR[\vx \sim \setP \vert y]{ \max _{j \in \{0,1\}} \pi_j \left( \vx \right) < q }.
    \end{align}
    For equation $(i)$, we used that $y$ is uniformly distributed across $\{-1, 1\}$, and for equation $(ii)$ that $\vx$ is symmetrically distributed with respect to the label $y$. That is, we have $\xinv \sim U[-c, -\gamma]$ and $\PR{\xspr = -1} = p$ if $y=-1$ and $\xinv \sim U[\gamma, c]$ and $\PR{\xspr = +1} = p$ if $y=+1$, so the two probabilities in $(i)$ are equal.

    We can further expand the expression for the probability $\PR[\vx \sim \setP \vert y]{ \max _{j \in \{0,1\}} \pi_j \left( \vx \right) < q }$ by additionally conditioning on the spurious feature $\xspr$, which yields
    \begin{align}
        \label{eq:qtc_threshold_infinite_conditioned_on_y_xspr}
        \PR[(\vx, y) \sim \setP]{ \max _{j \in \{0,1\}} \pi_j \left( \vx \right) < q }
        &=
        \PR[\xinv \sim \setP \vert y, \xspr = y]{ \max _{j \in \{0,1\}} \pi_j \left( \vx \right) < q }
        \cdot
        \PR[\vx \sim \setP \vert y]{\xspr = y} \nonumber \\
        &\qquad +
        \PR[\xinv \sim \setP \vert \xspr \neq y]{ \max _{j \in \{0,1\}} \pi_j \left( \vx \right) < q }
        \cdot
        \PR[\vx \sim \setP \vert y]{\xspr \neq y}.
    \end{align}
    In order to simplify the expression in the RHS of equation~\eqref{eq:qtc_threshold_infinite_conditioned_on_y_xspr}, we consider the cases of $\wsp > 0$ and $\wsp < 0$ separately.
    If $\wsp > 0$, we have $\max _{j \in \{0,1\}} \pi_j \left( \vx \right) > q$ if $\xspr = y$. Therefore, we have $\PR[\xinv \sim \setP \vert y, \xspr = y]{ \max _{j \in \{0,1\}} \pi_j \left( \vx \right) < q } = 0$ if $\wsp > 0$ and equation~\eqref{eq:qtc_threshold_infinite_conditioned_on_y_xspr} simplifies to 
    \begin{align}
        \label{eq:qtc_threshold_infinite_conditioned_on_y_xspr_pos_wsp}
        \PR[(\vx, y) \sim \setP]{ \max _{j \in \{0,1\}} \pi_j \left( \vx \right) < q }
        &=
        \PR[\xinv \sim \setP \vert \xspr \neq y]{ \max _{j \in \{0,1\}} \pi_j \left( \vx \right) < q }
        \cdot
        \PR[\vx \sim \setP \vert y]{\xspr \neq y} \nonumber \\
        &=
        \PR[\xinv \sim \setP \vert \xspr \neq y]{ \max _{j \in \{0,1\}} \pi_j \left( \vx \right) < q }
        \cdot
        (1 - p^\setP).
    \end{align}
    Similarly, if $\wsp < 0$, we have $\max _{j \in \{0,1\}} \pi_j \left( \vx \right) > q$ if $\xspr \neq y$, and equation~\eqref{eq:qtc_threshold_infinite_conditioned_on_y_xspr} becomes
    \begin{align}
        \label{eq:qtc_threshold_infinite_conditioned_on_y_xspr_neg_wsp}
        \PR[(\vx, y) \sim \setP]{ \max _{j \in \{0,1\}} \pi_j \left( \vx \right) < q }
        &=
        \PR[\xinv \sim \setP \vert \xspr = y]{ \max _{j \in \{0,1\}} \pi_j \left( \vx \right) < q }
        \cdot
        \PR[\vx \sim \setP \vert y]{\xspr = y} \nonumber \\
        &=
        \PR[\xinv \sim \setP \vert \xspr = y]{ \max _{j \in \{0,1\}} \pi_j \left( \vx \right) < q }
        \cdot
        p^\setP.
    \end{align}

    We next follow the same steps that we carried out above for $\PR[(\vx, y) \sim \setP]{ \max _{j \in \{0,1\}} \pi_j \left( \vx \right) < q }$ to rewrite the probability $\PR[(\vx, y) \sim \setP]{\Emc}$.  
    If $\wsp > 0$, the classifier makes no errors if $\xspr = y$ and only misclassifies a fraction of examples if $\xspr \neq y$. Therefore, we have
    \begin{align}
        \label{eq:tps_coverage_infinite_conditioned_on_y_xspr_pos_wsp}
        \PR[\vx \sim \setP \vert y]{\Emc}
        &=
        \PR[\xinv \sim \setP \vert \xspr \neq y]{\Emc}
        \cdot
        \PR[\vx \sim \setP \vert y]{\xspr \neq y} \nonumber \\
        &=
        \PR[\xinv \sim \setP \vert \xspr \neq y]{\Emc}
        \cdot
        (1 - p^\setP).
    \end{align}
    Similarly, for $\wsp < 0$, we have
    \begin{align}
        \label{eq:tps_coverage_infinite_conditioned_on_y_xspr_neg_wsp}
        \PR[\vx \sim \setP \vert y]{\Emc}
        &=
        \PR[\xinv \sim \setP \vert \xspr \neq y]{\Emc}
        \cdot
        \PR[\vx \sim \setP \vert y]{\xspr = y} \nonumber \\
        &=
        \PR[\xinv \sim \setP \vert \xspr \neq y]{\Emc}
        \cdot
        p^\setP.
    \end{align}

    Therefore, in order to establish equation~\eqref{eq:qtc_infinite_source_threshold}, it suffices to show
    \begin{align}
        \label{eq:source_conditioned_pos_wsp}
        \PR[\xinv \sim \setP \vert y, \xspr \neq y]{\max _{j \in \{0,1\}} \pi_j \left( \vx \right) < q}
        &=
        \PR[\xinv \sim \setP \vert y, \xspr \neq y]{\Emc}, \quad \text{for $\wsp > 0$ and} \\
        \label{eq:source_conditioned_neg_wsp}
        \PR[\xinv \sim \setP \vert y, \xspr = y]{\max _{j \in \{0,1\}} \pi_j \left( \vx \right) < q}
        &=
        \PR[\xinv \sim \setP \vert y, \xspr = y]{\Emc}, \quad \text{ for $\wsp < 0$}. 
    \end{align}
    The feature $\xinv$ is identically distributed conditioned on $y$, i.e., uniformly distributed in the same interval, regardless of the underlying source or target distributions $\setP$ and $\setQ$. Therefore, equations~\eqref{eq:source_conditioned_pos_wsp}   and~\eqref{eq:source_conditioned_neg_wsp} are equivalent to
    \begin{align}
        \label{eq:qtc_infinite_distribution_free_threshold_pos_wsp}
        \PR[\xinv \sim \setQ \vert y, \xspr \neq y]{\max _{j \in \{0,1\}} \pi_j \left( \vx \right) < q}
        &=
        \PR[\xinv \sim \setQ \vert y, \xspr \neq y]{\Emc}, \quad \text{for $\wsp > 0$ and} \\
\label{eq:qtc_infinite_distribution_free_threshold_neg_wsp}
        \PR[\xinv \sim \setQ \vert y, \xspr = y]{\max _{j \in \{0,1\}} \pi_j \left( \vx \right) < q}
        &=
        \PR[\xinv \sim \setQ \vert y, \xspr = y]{\Emc}, \quad \text{ for $\wsp < 0$}.
    \end{align}
    Equations~\eqref{eq:qtc_infinite_distribution_free_threshold_pos_wsp} and~\eqref{eq:qtc_infinite_distribution_free_threshold_neg_wsp} in turn follow from 
    \begin{align}
        \label{eq:qtc_infinite_target_threshold}
        \PR[(\vx, y) \sim \setQ]{ \max _{j \in \{0,1\}} \pi_j \left( \vx \right) < q }
        =
        \PR[(\vx, y) \sim \setQ]{\Emc},
    \end{align}
    by carrying out the same steps that we carried out to expand the probabilities $\PR[\vx \sim \setP \vert y]{\max _{j \in \{0,1\}} \pi_j \left( \vx \right) < q}$ and $\PR[(\vx, y) \sim \setP]{\max _{j \in \{0,1\}} \pi_j \left( \vx \right) < q}$ starting from equation~\eqref{eq:qtc_threshold_infinite_conditioned_on_y} to establish equations~\eqref{eq:source_conditioned_pos_wsp} and \eqref{eq:source_conditioned_neg_wsp}. 
    Equation~\eqref{eq:qtc_infinite_target_threshold} in turn is a consequence of combining~\eqref{eq:oracle_binary_target_coverage} with~\eqref{eq:thm_qtc_threshold} at $n \rightarrow \infty$. This establishes equation~\eqref{eq:qtc_infinite_source_threshold}, as desired.

    \paragraph{The finite sample case:} 
    We next show that the desired results approximately hold with high probability over a randomly drawn finite-sized set of examples $\DsetQ$.
    We bound the difference between the LHS and RHS of~\eqref{eq:qtc_infinite_distribution_free_threshold_pos_wsp} and~\eqref{eq:qtc_infinite_distribution_free_threshold_neg_wsp} with high probability.

    First, consider the case of $\wsp > 0$. Recall that for the case of $\wsp > 0$ we are interested in the regime where $\xspr \neq y$. We denote the set of points in the target set $\DsetQ$ for which the spurious feature disagrees with the label as
    \begin{align*}
    \setXD = \{ i = 1,\ldots,n : \xspr \neq y, (\vx_i, y_i) \in \DsetQ  \},
    \end{align*}
    and denote the set of points for which the spurious feature agrees with the label as 
    \begin{align*}
    \setXA = \{ i = 1,\ldots,n : \xspr = y, (\vx_i, y_i) \in \DsetQ  \}.
    \end{align*}

    Note that the QTC threshold $q$ found on the entire set $\DsetQ$ as in~\eqref{eq:thm_qtc_threshold} satisfies
    \begin{align}
        \label{eq:qtc_subset_alpha}
        \dfrac{1}{\vert \setXD \vert} \sum_{i \in \setXD}
        \ind{ \max _{j \in \{0,1\}} \pi_j \left( \vx_i \right) < q } 
        =
        \dfrac{1}{\vert \setXD \vert} \sum_{i \in \setXD}
        \ind{ \Emc(\vx_i,y_i) },
    \end{align}
    which follows from noting that the classifier only makes an error on the subset $\setXD$ if $\wsp > 0$ and therefore the only points for which the event $\Emc$ is observed lie in the set $\setXD$. Similarly, as established before in the infinite sample case, we have $\ind{ \max _{j \in \{0,1\}} \pi_j \left( \vx_i \right) < q } = 0$ for all $i \in \setXD$.  

    By the Dvoretzky-Kiefer-Wolfowitz-Massart (DKWM) inequality, for any $q > 0$ we have with probability at least $1 - \delta/8$
    \begin{align}
        \label{eq:dkw_ineq_pos_wsp}
        \left| \dfrac{1}{\vert \setXD \vert} \sum _{i \in \setXD} 
            \ind{ \max _{j \in \{0,1\}} \pi_j \left( \vx_i \right)  < q } 
            - 
            \EX[\xinv \sim \setQ \vert y, \xspr \neq y]{ \ind{ \max _{j \in \{0,1\}}\pi_j\left( \vx \right)  < q }}
        \right|
        \leq
        \sqrt {\dfrac{\log(16/\delta)}{2 \vert \setXD \vert } }.
    \end{align}
    Plugging equation~\eqref{eq:qtc_subset_alpha} into~\eqref{eq:dkw_ineq_pos_wsp}, we have with probability at least $1 - \delta/8$
    \begin{align}
        \label{eq:pos_wsp_intermediate_bound_from_empirical_alpha}
        \left|
            \EX[\xinv \sim \setQ \vert y, \xspr \neq y]{\ind{ \max _{j \in \{0,1\}} \pi_j \left( \vx \right) < q }}
            -
            \dfrac{1}{\vert \setXD \vert} \sum_{i \in \setXD}
            \ind{ \Emc }
        \right|
        \leq
        \sqrt {\dfrac{\log(16/\delta)}{2 \vert \setXD \vert } }.
    \end{align}
    We next bound the second term in the LHS of equation~\eqref{eq:pos_wsp_intermediate_bound_from_empirical_alpha} from its expectation. Using Hoeffding's inequality, we have with probability at least $1 - \delta/8$
    \begin{align}
        \label{eq:pos_wsp_emc_on_setXD_hoeffdings}
        \left|
            \dfrac{1}{\vert \setXD \vert} \sum_{i \in \setXD}
            \ind{ \Emc }
            -
            \EX[\xinv \sim \setQ \vert y, \xspr \neq y]{\ind{ \Emc }}
        \right|
        \leq
        \sqrt {\dfrac{\log(16/\delta)}{2 \vert \setXD \vert } }.
    \end{align}
    Combining equations~\eqref{eq:pos_wsp_intermediate_bound_from_empirical_alpha} and~\eqref{eq:pos_wsp_emc_on_setXD_hoeffdings} using the triangle inequality and union bound, we have with probability at least $1 - \delta/4$
    \begin{align}
        \label{eq:pos_wsp_cardinality_bound_finite_invariant_result}
        \left|
            \EX[\xinv \sim \setQ \vert y, \xspr \neq y]{\ind{ \max _{j \in \{0,1\}} \pi_j \left( \vx \right) < q }}
            -
            \EX[\xinv \sim \setQ \vert y, \xspr \neq y]{\ind{ \Emc }}
        \right|
        \leq
        \sqrt {\dfrac{2 \log(16/\delta)}{\vert \setXD \vert } }.
    \end{align}
    Recall that the invariant feature $\xinv$ is uniformly distributed in the same interval conditioned on $y$ regardless of the source or target distributions $\setP$ and $\setQ$ and that $\PR[\xinv \vert y, \xspr = y]{\max _{j \in \{0,1\}} \pi_j \left( \vx \right) > q} = \PR[\xinv \vert y, \xspr = y]{\arg \max _{j \in \{0,1\}} \pi_j \left( \vx \right) \neq y} = 0$
    for the case of $\wsp > 0$ as shown before. Therefore, by dividing both sides of~\eqref{eq:pos_wsp_cardinality_bound_finite_invariant_result} with $1/{ \PR[\vx \sim \setP \vert y]{\xspr \neq y} }$ we have with probability at least $1 - \delta/4$
    \begin{align}
        \label{eq:pos_wsp_cardinality_bound_finite_source_result}
        \left|
            \EX[(\vx, y) \sim \setP]{\ind{ \max _{j \in \{0,1\}} \pi_j \left( \vx \right) < q }}
            -
            \EX[(\vx, y) \sim \setP]{\ind{ \Emc }}
        \right|
        &\leq
        \dfrac{1}{{ \PR[\vx \sim \setP \vert y]{\xspr \neq y} }}
        \sqrt {\dfrac{2 \log(16/\delta)}{\vert \setXD \vert } } \nonumber \\
        &=
        \dfrac{1}{{ 1 - p^\setP }}
        \sqrt {\dfrac{2 \log(16/\delta)}{\vert \setXD \vert } }. \qquad \qquad
    \end{align}
    For the case of $\wsp < 0$, we can show an analogous result by noting that the above results can be shown on the set $\setXA$, where $\xspr = y$.
    Specifically, noting that $\dfrac{1}{\vert \setXA \vert} \sum_{i \in \setXA} \ind{ \max _{j \in \{0,1\}} \pi_j \left( \vx_i \right) < q } = \dfrac{1}{\vert \setXA \vert} \sum_{i \in \setXA} \ind{ \Emc }$ if $\wsp < 0$ and following exactly the same steps from equation~\eqref{eq:qtc_subset_alpha} onward that lead to equation~\eqref{eq:pos_wsp_cardinality_bound_finite_source_result}, we have with probability at least $1 - \delta/4$
    \begin{align}
        \label{eq:neg_wsp_cardinality_bound_finite_source_result}
        \left|
            \EX[(\vx, y) \sim \setP]{\ind{ \max _{j \in \{0,1\}} \pi_j \left( \vx \right) < q }}
            -
            \EX[(\vx, y) \sim \setP]{\ind{ \Emc }}
        \right|
        \leq
        \dfrac{1}{{ p^\setP }}
        \sqrt {\dfrac{2 \log(16/\delta)}{\vert \setXA \vert } }.
    \end{align}

    Using Hoeffding's inequality we can further bound the RHS of~\eqref{eq:pos_wsp_cardinality_bound_finite_source_result} and~\eqref{eq:neg_wsp_cardinality_bound_finite_source_result}. For the set $\setXD$, we have with probability at least $1-\delta/2$
    \begin{align}
        \label{eq:XD_cardinality_bound}
        \left| | \setXD | - n \cdot (1-p^\setQ) \right|
        \leq
        \sqrt {\dfrac{\log(4/\delta)}{2 n} },
    \end{align}
    and for the set $\setXA$, we have with probability at least $1-\delta/2$
    \begin{align}
        \label{eq:XA_cardinality_bound}
        \left| | \setXA | - n \cdot p^\setQ \right|
        \leq
        \sqrt {\dfrac{\log(4/\delta)}{2 n} }.
    \end{align}
    We next bound the difference between the finite sample QTC estimation on the source from its expectation. By DKWM inequality, for any $q > 0$ we have with probability at least $1 - \delta/4$
    \begin{align}
        \label{eq:dkw_ineq_source_pos_wsp}
        \left| 
            \frac{1}{ |\DsetP| } \sum_{\vx \in \DsetP }
            \ind{ \max _{j \in \{0,1\}} \pi_j \left( \vx \right)  < q } 
            - 
            \EX[(\vx, y) \sim \setP]{\ind{ \max _{j \in \{0,1\}} \pi_j \left( \vx \right) < q }}
        \right|
        \leq
        \sqrt {\dfrac{\log(8/\delta)}{2n} }.
    \end{align}

    We first show the result for the case $\wsp > 0$. Combining equations~\eqref{eq:pos_wsp_cardinality_bound_finite_source_result} and~\eqref{eq:dkw_ineq_source_pos_wsp} using the triangle inequality and union bound, we have with probability at least $1-\delta/2$
    \begin{align}
        \label{eq:tri_ineq_source_pos_wsp}
        \left| 
            \frac{1}{ |\DsetP| } \sum_{\vx \in \DsetP }
            \ind{ \max _{j \in \{0,1\}} \pi_j \left( \vx \right)  < q } 
            - 
            \EX[(\vx, y) \sim \setP]{\ind{ \Emc }}
        \right|
        \leq
        \dfrac{1}{{ 1 - p^\setP }}
        \sqrt {\dfrac{2 \log(16/\delta)}{\vert \setXD \vert } }.
    \end{align}
    Plugging in the definitions of $\beta_\QTC$ in~\eqref{eq:thm_beta_qtc} and $\beta$ in~\eqref{eq:oracle_binary_source_coverage} above, equivalently we get
    \begin{align}
        \label{eq:tri_ineq_source_pos_wsp_concise}
        \left| 
            \beta_\QTC
            - 
            \beta
        \right|
        \leq
        \dfrac{1}{{ 1 - p^\setP }}
        \sqrt {\dfrac{2 \log(16/\delta)}{\vert \setXD \vert } },
    \end{align}
    which holds with probability at least $1-\delta/2$. Combining~\eqref{eq:tri_ineq_source_pos_wsp_concise} with~\eqref{eq:XD_cardinality_bound} proves equation~\eqref{eq:thm_result} for $\wsp > 0$, as desired. 

    Similarly, for the case $\wsp < 0$, following the same steps by first combining equation~\eqref{eq:neg_wsp_cardinality_bound_finite_source_result} with~\eqref{eq:dkw_ineq_source_pos_wsp}, we have with probability at least $1-\delta/2$
    \begin{align}
        \label{eq:tri_ineq_source_neg_wsp_concise}
        \left| 
            \beta_\QTC
            - 
            \beta
        \right|
        \leq
        \dfrac{1}{{ p^\setP }}
        \sqrt {\dfrac{2 \log(16/\delta)}{\vert \setXA \vert } }.
    \end{align}
    Combining~\eqref{eq:tri_ineq_source_neg_wsp_concise} with~\eqref{eq:XA_cardinality_bound} yields equation~\eqref{eq:thm_result}, as desired, for the case $\wsp < 0$, which concludes the proof.

\end{proof}

\section{Details on the baseline regression methods}
\label{app:baseline_regression}

In this section, we provide details on the baseline regression based methods.
Recall that we consider several regression-based methods as baselines by fitting a regression function $f_{\theta}$ parameterized by a feature extractor $\phi_\pi$ by minimizing the mean squared error between the output and the calibrated threshold $\tau$ across the distributions as
\begin{align*}
    \hat \theta
    = 
    \arg \min_\theta \sum_j ( f_\theta ( \phi_\pi (\setD_j)) - \tau^{\setP_j} )^2.
\end{align*}
We consider the following choices for the feature extractor $\phi_\pi$:
\begin{itemize}
    \item \emph{Average confidence regression (ACR)}: 
    The one-dimensional ($d = 1$) average confidence of the classifier across the entire dataset which is
    $\phi_\pi (\setD) = \frac{1}{|\setD|} \sum_{\vx \in \setD} \max_\ell \pi_\ell(\vx)$.
    \item \emph{Difference of confidence regression (DCR)}~\citep{guillory2021PredictingConfidenceUnseen}:
    The one-dimensional ($d = 1$) average confidence of the classifier across the entire dataset offset by the average confidence on the source dataset, which is
    $\phi_\pi (\setD) = \frac{1}{|\setD|} \sum_{\vx \in \setD} \max_\ell \pi_\ell(\vx) - \frac{1}{|\setD^\setP|} \sum_{\vx \in \setD^\setP} \max_\ell \pi_\ell(\vx)$, where $\setD^\setP$ is the source dataset.
    Prediction is also for the offset target $\tau - \tau^\setP$.

    We consider DCR in addition to ACR, because DCR performs better for predicting the classifier accuracy~\citep{guillory2021PredictingConfidenceUnseen}. 
    Since the threshold $\tau$ found by conformal calibration depends on the distribution of the confidences beyond the average, we propose the below techniques for extracting more detailed information from the dataset.
    \item \emph{Confidence histogram-density regression (CHR)}: 
    Variable dimensional ($d = p$) features extracted as
    $\phi_\pi (\setD) = \left\{ \frac{1}{|\setD|} \sum_{\vx \in \setD} \ind{\max_\ell \pi_\ell (\vx) \in \left[ \frac{j-1}{p}, \frac{j}{p} \right] } \right\}_{j=\{1,\ldots,p\}}$.
    This corresponds to the normalized histogram density of the classifier confidence across the dataset, where p is a hyperparameter that determines the number of histogram bins in the probability range $[0,1]$.
    Neural networks tend to be overconfident in their prediction which heavily skews the histogram densities to the last bin.
    We also therefore consider a variant of CHR, \emph{dubbed CHR-}, where we have $j=\{1, \ldots, p-1\}$ and hence $d = p-1$, equivalent to dropping the last bin of the histogram as a feature.
    \item \emph{Predicted class-wise average confidence regression (PCR)}: 
    Features with dimensionality equal to the number of classes ($d = L$) extracted as
    $\phi_\pi (\setD) = 
        \left\{
            \frac{\sum_{\vx \in \setD} \pi_j (\vx) \cdot \ind{l =  \arg \max_\ell \pi_\ell (\vx)}}{\sum_{\vx \in \setD} \ind{l =  \arg \max_\ell \pi_\ell (\vx)}}
        \right\}_{j=\{1,\ldots,L\}}.
    $
    This corresponds to the average confidence of the classifier across the samples for each predicted class.
\end{itemize}

\begin{figure}[t]
    \begin{center}
    \scalebox{0.8}{%
    \begin{tikzpicture}
    
    
    \begin{groupplot}[
    width=5.5cm,
    height=5cm,
    group style={
        group name=prediction_vs_alpha,
        group size=3 by 1,
        xlabels at=edge bottom,
        ylabels at=edge left,
        xticklabels at=edge bottom,
        horizontal sep=1.1cm, 
        vertical sep=2cm,
        },
    legend cell align={left},
    legend columns=1,
    legend style={
                at={(1,1)}, 
                anchor=north west,
                draw=black!30,
                fill=none,
                font=\scriptsize,
                /tikz/every even column/.append style={column sep=0.5cm}
            },
    /tikz/every even column/.append style={column sep=0.1cm},
    /tikz/every row/.append style={row sep=0.3cm},
    colormap name=viridis,
    cycle list={[colors of colormap={10,50,350,750}]},
    tick align=outside,
    tick pos=left,
    x grid style={white!69.01960784313725!black},
    xtick style={color=black},
    y grid style={white!69.01960784313725!black},
    ytick style={color=black},
    xlabel={desired coverage $1-\alpha$}, 
    ylabel={achieved coverage},
    every axis plot/.append style={thick},
    ]
    
    
    \nextgroupplot[title=ImageNetV2]

    \addplot +[thick, blue, select coords between index={427}{433}] table[x expr=1-\thisrow{alpha},y expr=1-\thisrow{alpha}, col sep=comma]{./tables/natural_datasets_raps_vs_alpha.csv};

    \addplot [magenta, thick, mark=*, mark options={scale=1.0}] table[x expr=1-\thisrow{alpha},y expr=\thisrow{original_coverage}, col sep=comma, select coords between index={427}{433}]{./tables/natural_datasets_raps_vs_alpha.csv};

    \addplot +[dashed, mark=*, mark options={scale=1.0}] table[x expr=1-\thisrow{alpha},y expr=\thisrow{predicted_coverage}, col sep=comma, select coords between index={112}{118}]{./tables/imagenet_curves_vs_alpha.csv};



    \addplot +[dotted, mark=diamond*, mark options={scale=1.0}] table[x expr=1-\thisrow{alpha},y expr=\thisrow{predicted_coverage}, col sep=comma, select coords between index={91}{97}]{./tables/imagenet_curves_vs_alpha.csv};


    \nextgroupplot[title=ImageNet-Sketch]

    \addplot +[thick, blue, select coords between index={224}{230}] table[x expr=1-\thisrow{alpha},y expr=1-\thisrow{alpha}, col sep=comma]{./tables/natural_datasets_raps_vs_alpha.csv};

    \addplot [magenta, thick, mark=*, mark options={scale=1.0}] table[x expr=1-\thisrow{alpha},y expr=\thisrow{original_coverage}, col sep=comma, select coords between index={224}{230}]{./tables/natural_datasets_raps_vs_alpha.csv};

    \addplot +[dashed, mark=*, mark options={scale=1.0}] table[x expr=1-\thisrow{alpha},y expr=\thisrow{predicted_coverage}, col sep=comma, select coords between index={322}{328}]{./tables/imagenet_curves_vs_alpha.csv};



    \addplot +[dotted, mark=diamond*, mark options={scale=1.0}] table[x expr=1-\thisrow{alpha},y expr=\thisrow{predicted_coverage}, col sep=comma, select coords between index={301}{307}]{./tables/imagenet_curves_vs_alpha.csv};


    \nextgroupplot[title=ImageNet-R]

    \addplot +[thick, blue, select coords between index={28}{34}] table[x expr=1-\thisrow{alpha},y expr=1-\thisrow{alpha}, col sep=comma]{./tables/natural_datasets_raps_vs_alpha.csv};
    \addlegendentry{$y=x$}

    \addplot [magenta, thick, mark=*, mark options={scale=1.0}] table[x expr=1-\thisrow{alpha},y expr=\thisrow{original_coverage}, col sep=comma, select coords between index={28}{34}]{./tables/natural_datasets_raps_vs_alpha.csv};
    \addlegendentry{original};


    \addplot +[dashed, mark=*, mark options={scale=1.0}] table[x expr=1-\thisrow{alpha},y expr=\thisrow{predicted_coverage}, col sep=comma, select coords between index={539}{545}]{./tables/imagenet_curves_vs_alpha.csv};
    \addlegendentry{QTC};


    \addplot +[dotted, mark=diamond*, mark options={scale=1.0}] table[x expr=1-\thisrow{alpha},y expr=\thisrow{predicted_coverage}, col sep=comma, select coords between index={511}{517}]{./tables/imagenet_curves_vs_alpha.csv};
    \addlegendentry{CHR-};







    \end{groupplot}
    
    \end{tikzpicture}}
    \end{center}
    \vspace{-0.2cm}
    \caption{\label{fig:raps_prediction_vs_alpha}
        Coverage obtained by RAPS on the target distribution $\setQ$ for various settings of ($1-\alpha$) w/ and w/o recalibration using QTC.
    }
\end{figure}
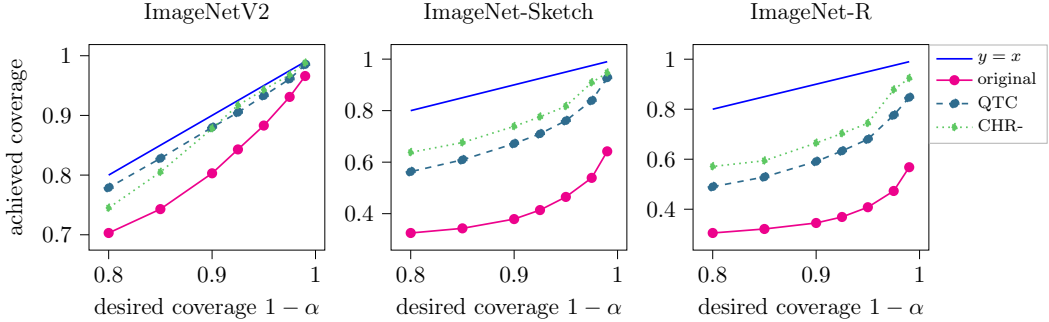

\section{RAPS recalibration experiments}
\label{sec:raps_experiments}
APS is a powerful yet simple conformal predictor. However, other conformal predictors~\citep{sadinle2019LeastAmbiguousSetValued, angelopoulos2020UncertaintySetsImage} are more efficient (in that they have on average smaller confidence sets for a given desired coverage $1-\alpha$). 

In this section, we focus on the conformal predictor proposed by~\citet{angelopoulos2020UncertaintySetsImage}, dubbed Regularized Adaptive Prediction Sets (RAPS). RAPS is an extension of APS that is obtained by adding a regularizing term to the classifier's probability estimates of the higher-order predictions (i.e., subsequent predictions after the top-k predictions). 
RAPS is more efficient and tends to produce smaller confidence sets when calibrated on the same calibration set as APS, as it penalizes large sets. 
While TPS tends to achieve slightly better results in terms of efficiency compared to RAPS, see~\citep[Table 9]{angelopoulos2020UncertaintySetsImage}, RAPS coverage tends to be more uniform across different instances (in terms of difficult vs. easy instances) and therefore RAPS still carries practical relevance.

Recall that while efficiency can be improved by constructing confidence sets more aggressively, efficient models tend to be less robust, 
meaning the coverage gap is greater when there is distribution shift at test time. 
For example, when calibrated to yield $1-\alpha=0.9$ coverage on ImageNet-Val and tested on Image-Sketch, the coverage of RAPS drops to $0.38$ in contrast to that of APS, which only drops to $0.64$ (see Section~\ref{sec:dist_shift_examples}).
It is therefore of interest to understand how QTC performs for recalibration of RAPS under distribution shift.

RAPS is calibrated using exactly the same conformal calibration process as APS and only differs from APS in terms of the prediction set function $\setC (\vx, u, \tau)$. The prediction set function for RAPS is defined as
\begin{align}
\label{eq:raps-set-function}
    \setC^\RAPS (\vx, u, \tau)
    =
    \left\{
        \ell \in \{ 1,\ldots,L \} \colon 
        \sum_{j=1}^{\ell-1} 
        [
            \pi_{(j)} (\vx) + \underbrace{\ind{j - \kreg > 0} \cdot \lambda}_{\text{regularization}}
        ]
        + 
        u \cdot \pi_{(\ell)} (\vx)
        \leq
        \tau
    \right\},
\end{align}
where $u \sim U(0,1)$, similar to APS and $\lambda, \kreg$ are the hyperparameters of RAPS corresponding to the regularization amount and the number of top non-penalized predictions respectively.

We show the results of RAPS' performance under distribution shift with or without calibration by QTC in Figure~\ref{fig:raps_prediction_vs_alpha}.
The results show that while QTC is not able to completely mitigate the coverage gap, it significantly reduces it.

\end{document}